\documentclass[11pt]{article}

\usepackage[preprint]{acl}

\usepackage{times}
\usepackage{latexsym}

\usepackage[T1]{fontenc}
\usepackage[utf8]{inputenc}

\usepackage{microtype}

\usepackage{inconsolata}

\usepackage{graphicx}
\usepackage{subfigure}
\usepackage{amsmath}
\usepackage{amssymb}
\usepackage{bm}
\usepackage{amsthm}
\newtheorem{proposition}{Proposition}
\usepackage{booktabs}
\usepackage{multirow}
\usepackage{color, colortbl}
\definecolor{Gray}{gray}{0.9}
\usepackage{subcaption}
\usepackage[ruled,vlined,linesnumbered]{algorithm2e}

\title{GeoMin: Data-Efficient Semi-Supervised RLVR \\ 
via Geometric Distribution Modeling}

\author{
  \textbf{Guangcheng Zhu\textsuperscript{1,2}}\quad
  \textbf{Shenzhi Yang\textsuperscript{1,2}}\quad
  \textbf{Haobo Wang\textsuperscript{1}\thanks{Corresponding author.}}\quad
  \textbf{Xing Zheng\textsuperscript{2}}\\
  \textbf{Yingfan MA\textsuperscript{2}}\quad
  \textbf{Xuening Feng\textsuperscript{2}}\quad
  \textbf{Zhongqi Chen\textsuperscript{2}}\quad
  \textbf{Kai Tang\textsuperscript{12}}\\
  \textbf{Zhengqing Zang\textsuperscript{12}}\quad
  \textbf{Bowen Song\textsuperscript{2}}\footnotemark[1]\quad
  \textbf{Weiqiang Wang\textsuperscript{2}}\quad
  \textbf{Gang Chen\textsuperscript{1}}
 \\
  \textsuperscript{1}Zhejiang University \quad \textsuperscript{2}Ant Group\\
}

\begin{document}
\maketitle
{\let\thefootnote\relax\footnotetext{Code is available at \href{https://github.com/gczhu/GeoMin}{https://github.com/gczhu/GeoMin}.}}
\begin{abstract}
Reinforcement learning with verifiable rewards (RLVR) significantly advances LLM reasoning, yet it faces a dilemma: standard supervised scaling is throttled by high annotation costs, while unsupervised alternatives suffer from severe model collapse.
Recent semi-supervised RLVR methods address this by using a small labeled set to guide unlabeled data, achieving a promising trade-off between training efficacy and annotation cost.
However, they suffer from a severe data-efficiency bottleneck due to the reliance on coarse performance heuristics, leaving a vast majority of valuable instances underutilized.
To this end, we propose \textbf{GeoMin}, which models global feature distributions on labeled data to decode the structural discrepancy between correct and incorrect rollouts, thereby establishing a robust prior to assess the reliability of self-reward signals and fully unleash the potential of unlabeled data.
Empirically, GeoMin outperforms the strongest baselines by \textbf{+4.1\%} and even surpasses fully supervised models with only \textbf{10\%} of the annotations, demonstrating remarkable data efficiency.
\end{abstract}

\section{Introduction}
Reinforcement learning with verifiable rewards (RLVR) has emerged as a pivotal paradigm in advancing the reasoning capabilities of large language models (LLMs)~\citep{jaech2024openai, guo2025deepseek, comanici2025gemini, yang2025qwen3}.
By grounding rewards in verifiable outcomes, RLVR prioritizes reasoning paths that lead to correct answers, effectively eliciting trustworthy reasoning~\citep{wen2025reinforcement,zhang2025survey}.
However, this efficacy relies on the manual annotation of ground-truth answers, a process notoriously time-consuming and labor-intensive. 
The challenge is further exacerbated in specialized domains such as medicine and finance, where expert-dependent labeling is prohibitively expensive.
Therefore, the annotation bottleneck severely limits the scalability of RLVR in the era of rapidly evolving LLMs~\citep{su2025crossing,zhao2026absolute,he2026far}.

This has spurred recent interest in unsupervised RLVR, which derives rewards from the model’s internal confidence signals like majority voting~\citep{zuo2025ttrl}, entropy~\citep{agarwal2025unreasonable}, and self-certainty~\citep{zhao2025learning}, thereby entirely eliminating the reliance on ground-truth annotations.
While capable of delivering early gains, several studies~\citep{shafayat2025can,zhang2025no,zhang2025co,he2026far} show that these unsupervised methods invariably suffer from reward hacking and model collapse. 
This failure stems from a naive reliance on unfiltered data, where intrinsically noisy self-reward signals reinforce false confidence divorced from factual accuracy. 
Thus, the model merely engages in self-deception without acquiring genuine reasoning capabilities, rendering the practical applicability of such methods infeasible.

\begin{figure*}[!t]
	\centering
    \subfigure[\small Selection efficacy comparison\label{fig:low_yield}]{
		\includegraphics[width=0.344\textwidth]{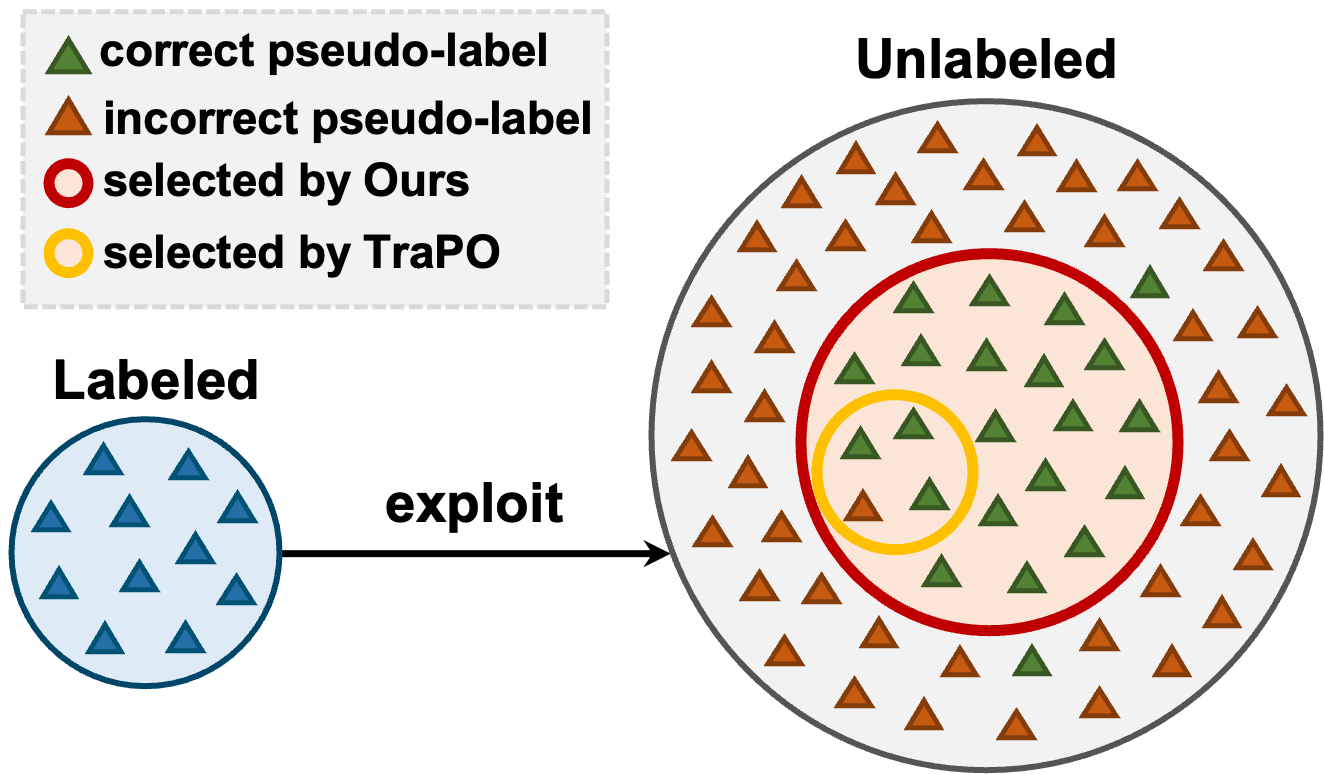}
	}
    \subfigure[\small Directional separation dynamics\label{fig:diff_curve}]{
		\includegraphics[width=0.354\textwidth]{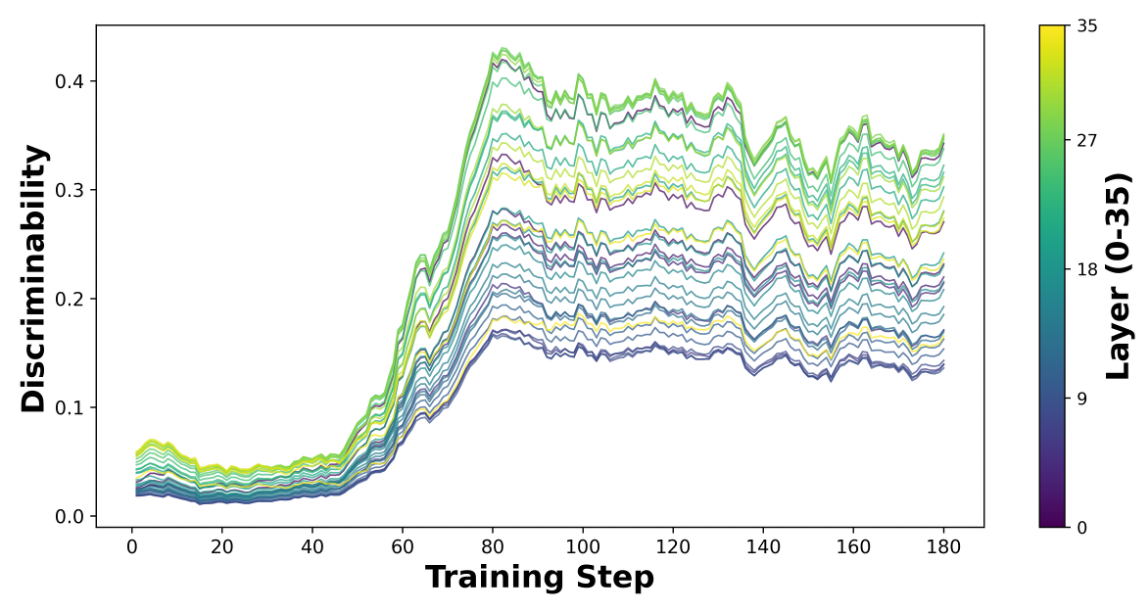}
	}
    \subfigure[\small Geometric resonance\label{fig:dist_match}]{
		\includegraphics[width=0.254\textwidth]{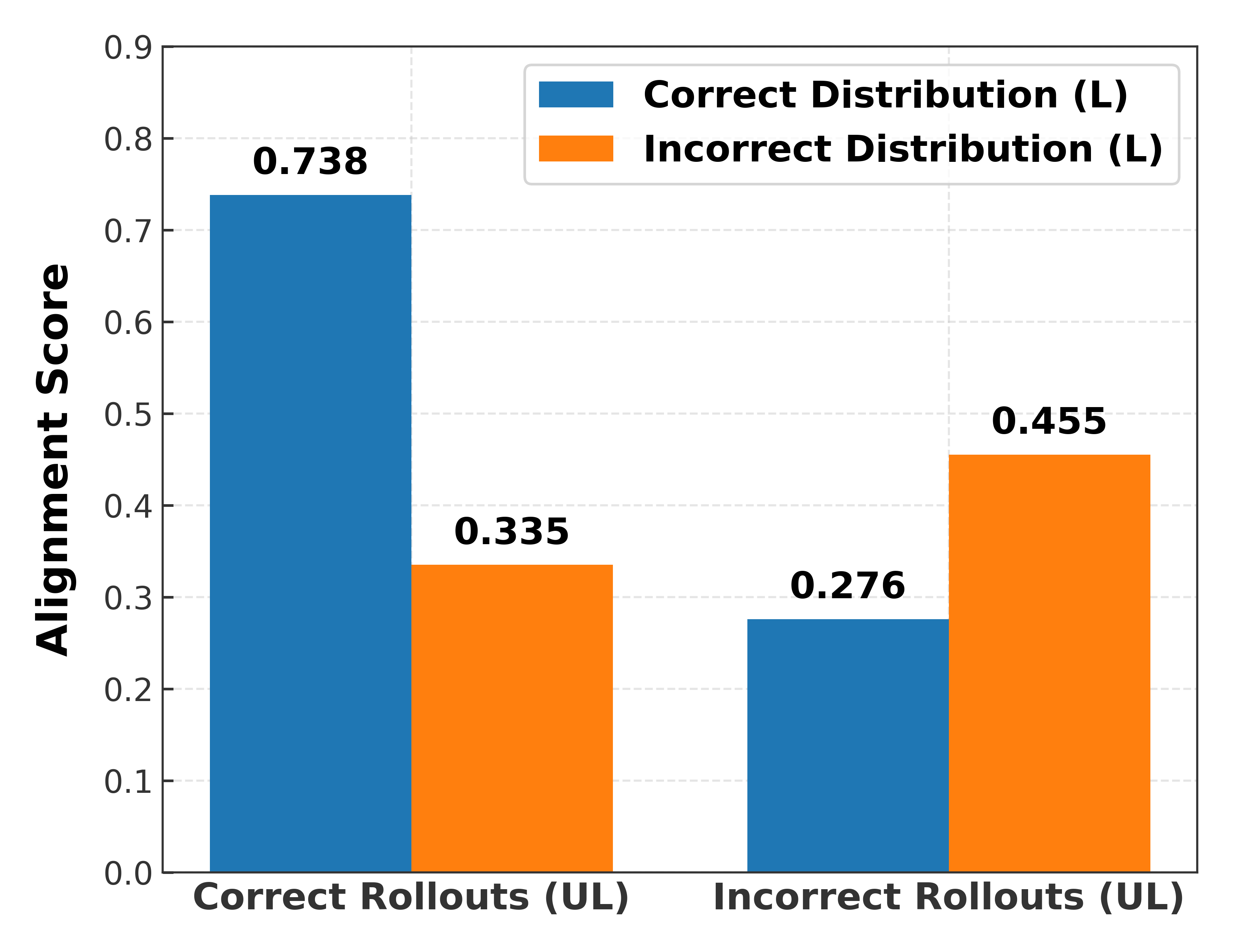}
	}
    \caption{\small 
    (a) TraPO selects a narrow subset, leaving much reliable data underutilized, whereas our method achieves broader, precise coverage for thorough sample mining. 
    (b) Temporal dynamics of distributional separation between correct and incorrect reasoning, which is absent in the base model but sharply emerges during training. 
    (c) Quantification of geometric resonance: unlabeled rollout directions consistently align with their respective labeled distributions conditioned on correctness.}
    \label{fig:cluster_vis}
    \vspace{0.5\baselineskip}
\end{figure*}

To address this, TraPO~\citep{yang2025trapo} prioritizes a semi-supervised RLVR paradigm, which leverages a small set of labeled data to guide the selection of high-quality unlabeled samples, thereby stabilizing training.
While this paradigm paves a promising pathway for annotation-efficient RLVR, our empirical analysis finds that it harvests merely about $12.3\%$ of the reliable unlabeled pool. 
As illustrated in Figure~\ref{fig:low_yield}, a vast majority of valuable instances remain underutilized, thus limiting the performance ceiling.
This recall bottleneck stems from TraPO's restrictive assumption that the performance growth of a reliable unlabeled instance must strictly align with the collective average of labeled data.
Crucially, while the collective average typically ascends monotonically, the optimization trajectory of an individual instance can frequently fluctuate.
Hence, benign individual variance is misinterpreted as unreliability, leading to the premature exclusion of valuable data.

We argue that the core of semi-supervised RLVR lies in decoding the structural discrepancy between correct and incorrect rollouts using labeled data. This learned discrepancy can serve as a robust prior to calibrate the veracity of self-reward signals on unlabeled data, rather than coarsely aligning macro performance as in TraPO. 
To this end, we focus on modeling the global feature distributions of correct and incorrect reasoning. 
Given the ubiquitous normalization in modern LLMs~\citep{zhang2019root,xie2026controlled,fu2026nemotron} that constrains activation scales, our analysis pivots to the directional domain of these representations.
As shown in Figure~\ref{fig:diff_curve}, as training progresses, a clear separation between the directional distributions of correct and incorrect rollouts naturally emerges.
Ultimately, this evolution yields a striking \textbf{geometric resonance}: correct unlabeled rollouts closely align with the correct labeled distribution, whereas incorrect ones consistently gravitate toward their incorrect counterparts (Figure~\ref{fig:dist_match}).
This resonance reveals that global distribution modeling isolates geometric correctness patterns, paving the way for unlabeled sample mining.

In this paper, we propose \textbf{GeoMin}, a two-stage framework that leverages the global distribution geometry to efficiently mine high-quality unlabeled data for semi-supervised RLVR.
The first stage aims to establish robust distribution discriminability via targeted supervised training on labeled data. 
Specifically, normalized hidden states of correct and incorrect rollouts are modeled as distinct von Mises-Fisher (vMF) distributions, where an advantage reweighting mechanism amplifies learning signals for boundary samples to accelerate representation separation.
The second stage extends to semi-supervised training by incorporating unlabeled data. 
Specifically, we compute a geometric confidence score for each unlabeled sample based on its relative affinity to the correct versus incorrect vMF distributions, subsequently fitting a Gaussian Mixture Model (GMM) to adaptively select reliable samples for training.
Through this coordinated two-stage paradigm, GeoMin enables fine-grained confidence calibration and identifies valuable training candidates with high fidelity, achieving an $\textbf{89.0\%}$ F1 score. 
Most strikingly, GeoMin not only outperforms the strongest baseline by $\textbf{+4.1\%}$ ID and $\textbf{+1.7\%}$ OOD absolute accuracy gains, but even \textbf{surpasses fully supervised baseline} using only $\textbf{10\%}$ of the annotations, demonstrating superior performance with minimal annotation overhead.

\begin{figure*}[!t]
    \centering
    \includegraphics[width=\textwidth]{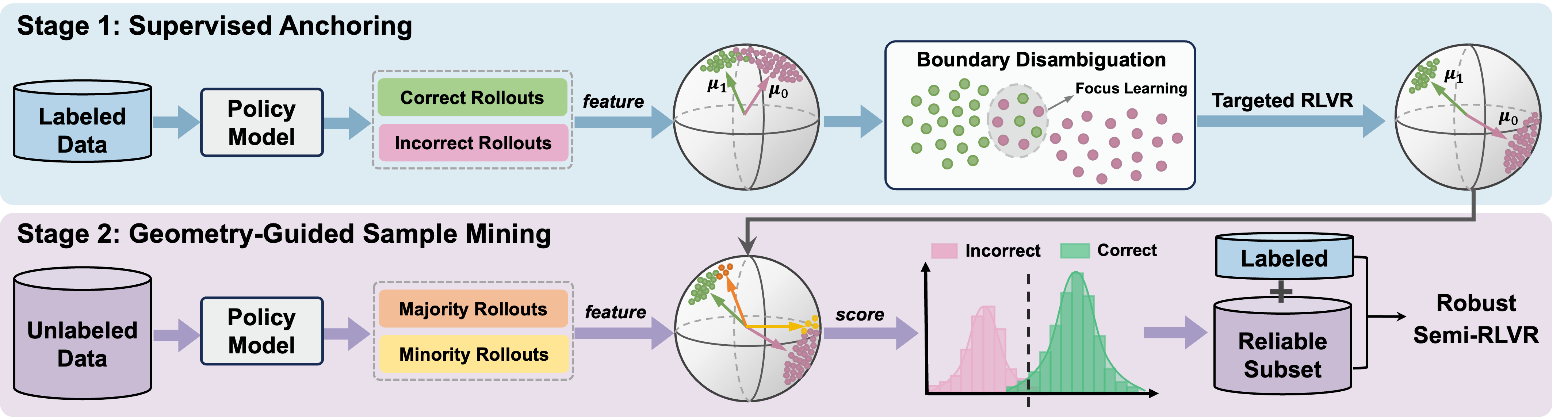}
    \caption{\small Overview of GeoMin. Labeled rollouts are first used to fit vMF distributions and sharpen decision boundaries. Guided by these geometric priors, we evaluate the confidence of unlabeled instances, which are then adaptively filtered via a GMM. Finally, the reliable samples are integrated with the labeled set for robust semi-supervised RLVR training.}
    \label{fig:framework}
\end{figure*}

\section{Preliminary}
\label{sec:preliminary}
\paragraph{Standard RLVR Setup.}
The RLVR paradigm employs a rule-based verifier to assign binary rewards based on response correctness.
Formally, for each question-answer pair $(q,a)$ in the dataset $\mathcal{D}$, the policy $\pi_\theta$ samples a response $y \sim \pi_\theta(\cdot \mid q)$. 
Let $\hat{a}$ denote the answer extracted from $y$; the reward is defined as $R(y,a)=\mathbb{I}[\hat{a}=a]$, where $\mathbb{I}[\cdot]$ is the indicator function. 
For policy optimization, we adopt the \textit{Group Relative Policy Optimization} (GRPO) algorithm~\citep{shao2024deepseekmath}.
GRPO eliminates the value model by estimating advantages directly from a group of responses.
For a given question $q$, we sample $G$ responses $\{y_j\}_{j=1}^G$ from the old policy $\pi_{\theta_{\text{old}}}$ and compute their rewards $R(y_j, a)$.
The group-normalized advantage $\hat{A}_j$ is given by:
\begin{equation}
\label{eq:advantage}
\hat{A}_j = \frac{R(y_j, a) - \text{mean}(\{R(y_j, a)\}_{j=1}^G)}{\text{std}(\{R(y_j, a)\}_{j=1}^G)}.
\end{equation}
% \vspace{-0.5\baselineskip}
Then, the GRPO objective is defined as:
\begin{equation}
\scalebox{0.93}{$
\begin{aligned}
\mathcal{J}_{\text{GRPO}}
& (\theta; \mathcal{D})
= \mathbb{E}[q \sim \mathcal{D}, \{y_j\}_{j=1}^G \sim \pi_{\theta_\text{old}}(\cdot \mid q)] \\
&\hspace{-2em} \frac{1}{G} \! \sum_{j=1}^G \! \frac{1}{|y_j|} \! \sum_{t=1}^{|y_j|} \! \text{CLIP}\bigl(\gamma_{j,t}(\theta), \hat{A}_j, \epsilon\bigr) \! - \! \beta \! \cdot \! \mathbb{D}_{\text{KL}}[\pi_\theta \,\!\|\,\! \pi_{\text{ref}}]
\end{aligned}
$}
\label{eq:GRPO_loss}
\end{equation}
where ${\gamma_{j,t}(\theta) \!\!=\! {\pi_{\theta}(y_{j,t} \!\mid\! q, y_{j,\!<t})}/{\pi_{\theta_{\text{old}}}(y_{j,t} \!\mid\! q, y_{j,\!<t})}}$ is the importance weight, $\text{CLIP}(\gamma, A, \epsilon) = \min[\gamma \!\cdot\! A, \text{clip}(\gamma; 1\!-\!\epsilon, 1\!+\!\epsilon) \!\cdot\! A]$ is the clipped surrogate objective, and $\mathbb{D}_{\text{KL}}$ 
denotes the KL divergence.

\paragraph{Semi-supervised RLVR.} 
To strike a balance between annotation cost and training efficacy, \citet{yang2025trapo} introduces a semi-supervised RLVR paradigm, which jointly utilizes a small labeled set $\mathcal{D}_l = \{(q, a)\}$ and an unlabeled set $\mathcal{D}_u = \{q\}$ through a hybrid reward function:
\begin{equation}
\label{eq:semi_reward}
R_{\text{semi}}(y_j) \!=\!
\begin{cases}
    R(y_j, a), & \!\!\text{if } (q, a) \in \mathcal{D}_l, \\
    R_u(y_j),    & \!\!\text{if } q \in \mathcal{D}_u.
\end{cases}
\end{equation}
For the unlabeled branch, the reward is defined as $R_u(y_j) \!=\! \mathbb{I}[\hat{a}_j \!=\! a^{*}]$, where the pseudo-label $a^{*}$ is derived via majority voting over the $G$ responses, denoted as $a^{*} \!=\! \mathrm{MAJ}(\hat{a}_1, \dots, \hat{a}_G)$.
The resulting rewards are then used to compute the advantage in Eq.~\eqref{eq:advantage} and the GRPO loss in Eq.~\eqref{eq:GRPO_loss}. 
In pursuit of the full potential of this paradigm, our goal is to use the labeled set $\mathcal{D}_l$ to effectively mine unlabeled data with reliable pseudo-labels $a^{*}$ for training.

\section{Method}
In this section, we introduce GeoMin, a two-stage framework designed for semi-supervised RLVR. As illustrated in Figure~\ref{fig:framework}, GeoMin operates sequentially: 
(1) the first stage fits online vMF distributions exclusively on labeled data to establish geometrically discriminative boundaries (Section~\ref{sec:stage1}); 
(2) the second stage leverages these calibrated distributions to adaptively mine high-quality unlabeled samples for training (Section~\ref{sec:stage2}).

\subsection{Formulation of Geometric Priors}
Central to our method is the directional discrepancy between correct and incorrect rollouts, an intrinsic geometric prior for assessing pseudo-label reliability.
To formalize this prior, we explicitly model it as the von Mises-Fisher (vMF) distribution, which is specifically designed for directional data on the hypersphere.
In our context, correct ($c\!=\!1$) and incorrect ($c\!=\!0$) rollouts are characterized as two distinct directional classes. 
Formally, the probability density function for a unit-normalized embedding $\bm{z} \in \mathbb{S}^{d-1}$ belonging to class $c$ is defined as:
\begin{equation}
\label{eq:pdf}
f(\bm{z} | \bm{\mu}_c, \kappa_c) = C_d(\kappa_c) \exp({\kappa_c \bm{\mu}_c^\top \bm{z})}.
\end{equation}
Here, $\bm{\mu}_c$ is the mean direction of the class $c$ with $\Vert\bm{\mu_c}\Vert_2=1$, $\kappa_c \geq 0$ is the class-specific concentration parameter indicating the tightness around $\bm{\mu}_c$, and $C_d(\kappa)$ is the normalization factor whose computation is described in Appendix \ref{sec:norm_compute}.
\begin{proposition}
Let $\bm{z} \in \mathbb{S}^{d-1}$ be the query feature, and let the class-conditional features $\bm{z}_c$ follow a von Mises-Fisher distribution $\text{vMF}(\bm{\mu}_c, \kappa_c)$.
The log-expected kernel density, defined as $\rho(\bm{z}, c) = \log(\mathbb{E}_{\bm{z}_c} [\exp(\bm{z}^\top \bm{z}_c)])$, admits the following closed-form expression:
\begin{equation}
\rho(\bm{z}, c) = \log C_d(\kappa_c) - \log C_d(\kappa_c'),
\end{equation}
where $\kappa_c' = \| \kappa_c \bm{\mu}_c + \bm{z} \|_2$. 
Detailed proof of this proposition is provided in Appendix~\ref{sec:proof_prop_1}.
\end{proposition}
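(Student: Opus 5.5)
The proof is a direct computation of the expectation as an integral over the sphere, using only the density in Eq.~\eqref{eq:pdf} and the fact that $C_d(\kappa)$ normalizes a vMF density. First I write
\begin{equation*}
\mathbb{E}_{\bm{z}_c}\bigl[\exp(\bm{z}^\top \bm{z}_c)\bigr] = \int_{\mathbb{S}^{d-1}} \exp(\bm{z}^\top \bm{u})\, C_d(\kappa_c)\exp(\kappa_c \bm{\mu}_c^\top \bm{u})\, d\bm{u} .
\end{equation*}
Then I combine the two exponentials into $\exp\bigl((\kappa_c\bm{\mu}_c + \bm{z})^\top \bm{u}\bigr)$.

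Next I set $\bm{v} = \kappa_c\bm{\mu}_c + \bm{z}$ and $\kappa_c' = \|\bm{v}\|_2$. When $\kappa_c' > 0$, I write $\bm{v} = \kappa_c' \bm{\mu}_c'$ with the unit vector $\bm{\mu}_c' = \bm{v}/\kappa_c'$. The integrand becomes $\exp(\kappa_c' \bm{\mu}_c'^\top \bm{u})$, which is an unnormalized $\text{vMF}(\bm{\mu}_c', \kappa_c')$ density.

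The key fact is that $C_d(\kappa)$ depends only on $\kappa$ and not on the mean direction. This holds by rotational invariance of the surface measure on $\mathbb{S}^{d-1}$: one rotates $\bm{\mu}$ to a fixed pole, and the explicit Bessel formula in Appendix~\ref{sec:norm_compute} confirms it. Hence
\begin{equation*}
\int_{\mathbb{S}^{d-1}} \exp(\kappa_c' \bm{\mu}_c'^\top \bm{u})\, d\bm{u} = \frac{1}{C_d(\kappa_c')} .
\end{equation*}
The expectation therefore equals $C_d(\kappa_c)/C_d(\kappa_c')$. Taking logarithms gives $\rho(\bm{z}, c) = \log C_d(\kappa_c) - \log C_d(\kappa_c')$.

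The main point needing care is the degenerate case $\kappa_c' = 0$, which occurs only when $\kappa_c = 1$ and $\bm{z} = -\bm{\mu}_c$. There $\bm{\mu}_c'$ is undefined, but the integral is simply the surface area of the sphere. This matches $1/C_d(0)$ under the standard convention that $C_d(0)$ is the uniform density, defined as the limit $\kappa \to 0$ of $C_d(\kappa)$. The remaining conditions are routine: the integrand is bounded, so the expectation is finite, and $C_d(\kappa) > 0$, so the logarithm is well defined.
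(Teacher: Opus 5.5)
Your proposal is correct and follows the paper's proof essentially step for step. You expand the expectation as an integral against the vMF density, merge the exponents into $(\kappa_c\bm{\mu}_c+\bm{z})^\top\bm{u}$, rescale to a unit mean direction, and use rotational invariance of the normalizer to obtain $C_d(\kappa_c)/C_d(\kappa_c')$; your handling of the degenerate case $\kappa_c'=0$ (which arises only when $\kappa_c=1$ and $\bm{z}=-\bm{\mu}_c$) through $C_d(0)=\lim_{\kappa\to 0}C_d(\kappa)$ is a small rigor addition that the paper omits.
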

\noindent Fundamentally, $\rho(\bm{z}, c)$ measures the distributional affinity (i.e., directional alignment) of $\bm{z}$ to class $c$. 
This enables us to bridge labeled and unlabeled data by leveraging labeled rollouts to calibrate the vMF priors, which in turn serve as a metric for assessing pseudo-label reliability on unlabeled instances.

\subsection{Stage 1: Supervised Anchoring toward Geometric Discriminability}
\label{sec:stage1}
As shown in Figure~\ref{fig:diff_curve}, a vanilla base model initially fails to exhibit the desired geometric resonance in its representation space. 
Confronted with this cold-start dilemma, incorporating unlabeled data prematurely would inevitably inject massive pseudo-label noise. 
Hence, we deploy a supervised anchoring stage exclusively using the labeled set $\mathcal{D}_l$ to model the spatial distributions and activate this directional alignment capability, paving the way for subsequent unlabeled sample mining.

\paragraph{Distribution Modeling.}
Given that representation spaces vary across network depths, we model dual vMF distributions for each layer $l$ by estimating their corresponding $\bm{\mu}_c^{(l)}$ and $\kappa_c^{(l)}$.
This parameter calibration is performed in an online, batch-wise manner to capture the dynamic evolution of the representation space during training.
Specifically, at each training step, we are provided with a mini-batch of question-answer pairs $\mathcal{B}={\{(q_i,a_i)\}_{i=1}^N}$, where each question $q_i$ is paired with $G$ rollouts $\mathcal{R}_i \!=\! \{y_{i,j}\}_{j=1}^G$.
These rollouts are flattened into a unified batch pool $\mathcal{R}_{\text{all}} \!=\! \bigcup_{i=1}^N \!\mathcal{R}_i \!=\! \{y_k\}_{k=1}^M$ with $M \!=\! N \!\cdot\! G$.
For each rollout $y_k \!\in\! \mathcal{R}_{\text{all}}$, we extract the hidden state of its last token at the $l$-th layer, denoted as $\bm{h}_k^{(l)} \!\in\! \mathbb{R}^d$, which encapsulates the cumulative sequence semantics.
We then project this state onto the unit hypersphere to obtain the normalized feature $\bm{z}_k^{(l)} \!=\! \bm{h}_k^{(l)} / \|\bm{h}_k^{(l)}\|_2 \!\in\! \mathbb{S}^{d-1}$. 
For each class $c$, the mean direction vector $\bar{\bm{z}}_c^{(l)}$ is updated via a momentum-based moving average (EMA):
\begin{equation}
\bar{\bm{z}}_c^{(l)} \leftarrow \lambda \bar{\bm{z}}_c^{(l)} + (1 - \lambda) \frac{\sum_{k=1}^M \mathbb{I}(c_k = c) \bm{z}_k^{(l)}}{\sum_{k=1}^M \mathbb{I}(c_k = c)},
\label{eq:mean_vec}
\end{equation}
where $\lambda \!\in\! (0,1)$ is the momentum coefficient, and $c_k = \mathbb{I}(\hat{a}_k=a_i)$ denotes the correctness of rollout $y_k \in \mathcal{R}_i$. The normalized mean direction is subsequently defined as $\bm{\mu}_c^{(l)} = \bar{\bm{z}}_c^{(l)} / \|\bar{\bm{z}}_c^{(l)}\|_2$.
Following \citet{sra2012short}, the layer-wise concentration parameter $\kappa_c^{(l)}$ can be directly estimated in closed form via:
\begin{equation}
\kappa_c^{(l)} = \frac{\|\bar{\bm{z}}_c^{(l)}\|_2 (d - \|\bar{\bm{z}}_c^{(l)}\|_2^2)}{1 - \|\bar{\bm{z}}_c^{(l)}\|_2^2}.
\label{eq:mu_kappa}
\end{equation}
With these layer-wise distributions fitted online, the affinity between an arbitrary feature $\bm{z}^{(l)}$ and the correctness class $c$ can be instantly measured via:
\begin{equation}
\rho(\bm{z}^{(l)}, c) = \log C_d(\kappa_c^{(l)}) - \log C_d(\kappa_c'^{(l)}),
\end{equation}
where $\kappa_c'^{(l)} = \| \kappa_c^{(l)} \bm{\mu}_c^{(l)} + \bm{z}^{(l)} \|_2$. 

\paragraph{Boundary Disambiguation.}
To accelerate the emergence of geometric resonance, we propose to actively identify geometrically confounded instances, and subsequently focus the model's optimization on these pivotal boundary samples.
Concretely, for each question $q_i$, we partition its rollout set $\mathcal{R}_i$ into a correct subset $\mathcal{R}_i^+ \!=\! \{y_k \!\in\! \mathcal{R}_i \!\mid\! c_k \!=\! 1\}$ and an incorrect subset $\mathcal{R}_i^- \!=\! \{y_k \!\in\! \mathcal{R}_i \!\mid\! c_k \!=\! 0\}$.
Based on the relative spatial configuration of the rollouts, a question $q_i$ is defined as an ambiguous boundary sample if either of the following geometric inversions occurs:
\begin{equation}
\label{eq:boundary_cond}
\scalebox{0.95}{$
\begin{aligned}
\!\max_{y_k \in \mathcal{R}_i^-} \sum_{l=1}^L \rho(\bm{z}_k^{(l)}, 1) &> \min_{y_k \in \mathcal{R}_i^+} \sum_{l=1}^L \rho(\bm{z}_k^{(l)}, 1), \\
\!\text{or} \, \max_{y_k \in \mathcal{R}_i^+} \sum_{l=1}^L \rho(\bm{z}_k^{(l)}, 0) &> \min_{y_k \in \mathcal{R}_i^-} \sum_{l=1}^L \rho(\bm{z}_k^{(l)}, 0).
\end{aligned}
$}
\end{equation}
where $L$ is the total number of hidden layers.
In other words, correct rollouts should consistently exhibit closer proximity to the correct direction than incorrect ones, and vice versa.
Instances satisfying Eq.~\eqref{eq:boundary_cond} deviate from this expectation, indicating that their representations remain ambiguous and stay on the model's cognitive boundary.
We argue that properly learning these boundary samples is the key to eliciting geometric resonance.
Therefore, we amplify their advantages as follows:
% \begin{equation}
% \hat{A}_k \leftarrow \left(1 + (\alpha - 1) \cdot \mathbb{I}[q_i \in \mathcal{B}]\right) \cdot \hat{A}_k \quad \text{for} \quad y_k \!\in\! \mathcal{R}_i
% \label{eq:advantage_amplify}
% \end{equation}
\begin{equation}
\tilde{A}_k \!=\! \left(1 + \alpha \cdot \mathbb{I}[q_i \!\in\! \Omega]\right) \!\cdot\! \hat{A}_k, \, \text{where } y_k \!\in\! \mathcal{R}_i
\label{eq:advantage_amplify}
\end{equation}
where $\Omega$ denotes the set of questions satisfying Eq.~\eqref{eq:boundary_cond}, $\hat{A}_k$ is the group-normalized advantage of rollout $y_k$ computed via Eq.~\eqref{eq:advantage}, and $\alpha \!>\! 0$ is a constant scaling factor. 
The modified advantage $\tilde{A}_k$ is then employed in Eq.~\eqref{eq:GRPO_loss} for policy optimization.

\subsection{Stage 2: Semi-Supervised RLVR with Geometry-Guided Sample Mining}
\label{sec:stage2}
Following the supervised anchoring stage, we perform semi-supervised RLVR training on a unified mini-batch $\mathcal{B} = \mathcal{B}_l \cup \mathcal{B}_u$, where $\mathcal{B}_l \!=\! \{(q_i, a_i)\}_{i=1}^{N_l}$ and $\mathcal{B}_u \!=\! \{q_i\}_{i=1}^{N_u}$ denote the labeled and unlabeled subsets, respectively. 
With the activated geometric resonance, we leverage the dual vMF distributions calibrated on $\mathcal{B}_l$ to rigorously evaluate pseudo-label reliability on $\mathcal{B}_u$, thereby mining high-quality unlabeled instances for robust training.

\paragraph{Continuous vMF Adaptation.}
To align with the shifting representation space as the policy evolves, we continuously refresh the layer-wise vMF distributions $(\bm{\mu}_c^{(l)}, \kappa_c^{(l)})$ at each training step. Following the online calibration detailed in Eq.~\eqref{eq:mean_vec} and Eq.~\eqref{eq:mu_kappa}, this parameter update is performed exclusively using the labeled subset $\mathcal{B}_l$  to establish a stable yet time-varying geometric anchor.

\paragraph{Discriminative Layer Selection.}
As illustrated in Figure~\ref{fig:diff_curve}, the capacity to differentiate correctness exhibits substantial variation across different network depths. To quantify this effect and isolate the layers with the highest diagnostic fidelity for subsequent reliability assessment, we define the \textit{layer separability} $\Delta^{(l)}$ over the labeled rollouts as:
\begin{equation}
\label{eq:separability}
\scalebox{0.92}{$
\begin{aligned}
\Delta^{(l)} =& \frac{1}{|\mathcal{H}_t^+|} \sum_{y_k \in \mathcal{H}_t^+} \! \left( \rho(\bm{z}_k^{(l)}\!, 1) - \rho(\bm{z}_k^{(l)}\!, 0) \right) \\
+& \frac{1}{|\mathcal{H}_t^-|} \sum_{y_k \in \mathcal{H}_t^-} \! \left( \rho(\bm{z}_k^{(l)}\!, 0) - \rho(\bm{z}_k^{(l)}\!, 1) \right),
\end{aligned}
$}
\end{equation}
where $\mathcal{H}_t^+$ and $\mathcal{H}_t^-$ denote the historical buffers of correct and incorrect labeled rollouts aggregated up to step $t$, respectively. 
Based on this metric, we retain the top-$K$ layers with the highest separability, denoted as $\mathcal{I}$, for pseudo-label verification.

\paragraph{Unlabeled Sample Mining.}
For each unlabeled instance $q_i \in \mathcal{B}_u$, we first derive its pseudo-label $a_i^*$ via majority voting. This voting naturally partitions its generated rollout pool $\mathcal{R}_i$ into a pseudo-correct majority subset $\mathcal{R}_i^* = \{y_k \in \mathcal{R}_i \mid \hat{a}_k = a_i^*\}$ and a pseudo-incorrect minority subset $\mathcal{R}_i \setminus \mathcal{R}_i^*$. 
To evaluate whether this pseudo-label aligns with the model's underlying geometric prior, we calculate a confidence score $s_i$ by aggregating the directional affinities over the selected discriminative layers $\mathcal{I}$:
\begin{equation}
\label{eq:conf_score}
\scalebox{0.92}{$
\begin{aligned}
s_i = \frac{1}{|\mathcal{R}_i|} \sum_{y_k \in \mathcal{R}_i} \sum_{l \in \mathcal{I}} \sigma_k\! \left( \rho(\bm{z}_k^{(l)}\!, 1) \!-\! \rho(\bm{z}_k^{(l)}\!, 0) \right),
\end{aligned}
$}
\end{equation}
where $\sigma_k \!=\! 1$ if $y_k \!\in\! \mathcal{R}_i^*$, and $\sigma_k \!=\! -1$ otherwise. Fundamentally, Eq.~\eqref{eq:conf_score} regularizes the pseudo-label: it assigns a high confidence score only when the majority rollouts strongly align with the correct vMF prior while the minority rollouts concurrently align with the incorrect prior.
However, manually calibrating a static threshold across these evolving scores is highly suboptimal and intensive.
To achieve adaptive and automated sample selection, we instead fit a two-component Gaussian Mixture Model (GMM) over the dynamic score pool $\mathcal{S} = \{s_i \!\mid\! q_i \!\in\! \mathcal{B}_u\}$.
Specifically, let $\mathcal{C}_{\text{high}}$ denote the Gaussian component with the larger mean. 
For each unlabeled instance, we compute its posterior probability $w_i \!=\! p(\mathcal{C}_{\text{high}} \!\mid\! s_i)$ via the Expectation-Maximization algorithm, which explicitly reflects the likelihood that the pseudo-label $a_i^*$ is correct. 
We then identify the reliable pseudo-labeled samples as $\mathcal{B}_u^* = \{ (q_i, a_i^*) \mid w_i > \tau \}$, which are finally merged with the labeled subset as $\mathcal{B}_l \cup \mathcal{B}_u^*$ for robust semi-supervised RLVR as described in Section~\ref{sec:preliminary}.

\section{Experiments}
In this section, we present the main results and a detailed analysis showing that GeoMin achieves superior performance with minimal annotation overhead. More experimental details and results are provided in Appendix~\ref{app:exp_setup} and \ref{app:exp_results}, respectively.

\subsection{Setup}
\label{sec:setup}
\paragraph{Implementation Details.}
Our algorithm is built upon the \textit{verl} framework~\citep{sheng2025hybridflow}, following the standard GRPO recipe~\citep{shao2024deepseekmath}. Training is conducted on the challenging subset of Deepmath-103k~\citep{he2025deepmath} (difficulty $\geq 6$) using $8 \!\times\! \text{A100}$ GPUs, with a total batch size of $128$, a micro-batch size of $32$, and a learning rate of $1e^{-6}$.
The number of rollouts per prompt is set to $G=8$. 
For the first stage, the scaling factor $\alpha$ in Eq.~\eqref{eq:advantage_amplify} is fixed at $2$, and the momentum $\lambda$ in Eq.~\eqref{eq:mean_vec} is adaptively updated based on the ratio of current batch samples to cumulative historical instances.
For the second stage, we select the top $K=10\%$ discriminative layers for confidence score computation in Eq.~\eqref{eq:conf_score} and set the GMM verification threshold to $\tau=0.5$.
We adopt Qwen3-8B-Base~\citep{yang2025qwen3} as the default backbone; evaluations on more models of diverse families and scales are provided in Appendix~\ref{app:more_model}.

\begin{table*}[!t]
\centering
\caption{In-domain (ID) and out-of-domain (OOD) performance using Qwen3-8B-Base. Methods are evaluated under purely unsupervised and/or semi-supervised ($10\%$ labeled data) settings. \textbf{Bold} denotes the best results.}
\label{tab:main_results}
\setlength{\tabcolsep}{3pt}  
\renewcommand{\arraystretch}{1.2} 
\resizebox{\textwidth}{!}{%
\begin{tabular}{lcccccc|cccc}
\toprule
\multirow{2}{*}{\textbf{Methods}} & \multicolumn{6}{c}{\textbf{In-Domain Performance}} & \multicolumn{4}{c}{\textbf{Out-of-Domain Performance}} \\
\cmidrule(lr){2-7} \cmidrule(lr){8-11}
 & \textbf{AIME 24/25} & \textbf{AMC} & \textbf{MATH-500} & \textbf{Minerva} & \textbf{Olympiad} & \textbf{Avg.} & \textbf{ARC-c} & \textbf{GPQA}$^{*}$ & \textbf{MMLU-Pro} & \textbf{Avg.} \\
\midrule
Before RL & 9.4/7.3 & 38.0 & 64.4 & 25.6 & 32.5 & 29.5 & 29.7 & 11.5 & 46.6 & 29.3 \\
\midrule
\rowcolor{Gray}\multicolumn{11}{c}{Unsupervised RLVR Training w/o Any Labeled Samples} \\
\midrule
TTRL & 17.7/12.5 & 54.4 & 81.7 & 43.8 & 46.0 & 42.7 & 92.3 & 44.6 & 64.0 & 67.0 \\
Tok-entropy & 12.3/11.1 & 44.8 & 78.5 & 34.6 & 41.8 & 37.2 & 43.1 & 10.2 & 56.7 & 36.7 \\
Seq-entropy & 14.2/15.8 & 51.9 & 78.3 & 38.0 & 43.1 & 40.2 & 81.3 & 20.8 & 59.5 & 53.9 \\
Self-certainty & 9.1/7.7 & 45.2 & 77.1 & 41.1 & 39.7 & 36.6 & 64.4 & 18.9 & 59.2 & 47.5 \\
Co-rewarding & 14.4/12.2 & 52.3 & 79.2 & 43.1 & 44.4 & 40.9 & 91.0 & 27.8 & 62.7 & 60.5 \\
\midrule
\rowcolor{Gray}\multicolumn{11}{c}{Semi-Supervised RLVR Training with $10\%$ Labeled Samples} \\
\midrule
TTRL & 16.7/16.1 & 54.0 & 80.7 & 44.2 & 47.6 & 43.2 & 92.9 & 46.1 & 64.3 & 67.8 \\
Tok-entropy & 15.5/14.6 & 46.5 & 79.0 & 37.0 & 42.3 & 39.2 & 73.9 & 17.8 & 57.4 & 49.7 \\
Seq-entropy & 15.1/16.6 & 51.0 & 79.4 & 39.0 & 42.9 & 40.7 & 78.1 & 17.3 & 58.8 & 51.4 \\
Self-certainty & 11.1/9.4 & 44.1 & 77.5 & 42.4 & 39.9 & 37.4 & 70.8 & 22.9 & 59.7 & 51.1 \\
Co-rewarding & 16.1/13.8 & 52.6 & 79.7 & 42.7 & 44.6 & 41.6 & 91.7 & 31.2 & 63.4 & 62.1 \\
TraPO & 19.8/14.3 & 56.5 & 81.5 & 43.4 & 47.4 & 43.8 & 92.3 & 45.8 & 63.5 & 67.2 \\
\textbf{GeoMin (ours)} & \textbf{24.5}/\textbf{19.2} & \textbf{58.5} & \textbf{86.7} & \textbf{45.5} & \textbf{52.7} & \textbf{47.9} & \textbf{93.7} & \textbf{48.2} & \textbf{66.5} & \textbf{69.5} \\
\midrule
\textcolor{gray}{{Fully Supervised}} & \textcolor{gray}{23.1/19.8} & \textcolor{gray}{55.1} & \textcolor{gray}{85.0} & \textcolor{gray}{46.5} & \textcolor{gray}{50.9} & \textcolor{gray}{46.7} & \textcolor{gray}{93.9} & \textcolor{gray}{50.5} & \textcolor{gray}{65.5} & \textcolor{gray}{70.0} \\
\bottomrule
% \vspace{-35pt}
\end{tabular}%
}
\end{table*}

\paragraph{Evaluation.}
Following the standard protocols in prior work \citep{yan2025learning,yang2025trapo}, we conduct comprehensive evaluations across diverse benchmarks spanning mathematical and general reasoning. 
For the mathematical domain, we test on six competition-level datasets: AIME 2024, AIME 2025, MATH-500~\citep{hendrycks2021measuring}, Minerva~\citep{lewkowycz2022solving}, AMC~\citep{li2024numinamath}, and OlympiadBench~\citep{he2024olympiadbench}. 
To assess out-of-domain generalization, we further include three general reasoning benchmarks: ARC-c~\citep{clark2018think}, GPQA-diamond~\citep{rein2024gpqa} (denoted GPQA$^*$), and MMLU-Pro~\citep{wang2024mmlu}. 
Given variations in test-set sizes, we report $\text{avg}@32$ for AIME 2024/2025 and AMC, $\text{pass}@1$ for MMLU-Pro, and $\text{avg}@4$ for others. 
All evaluations use temperature $0.6$ and top-$p$ $1.0$.

\paragraph{Baselines.} 
We compare GeoMin with six weakly-supervised RLVR baselines: 
(1) \textbf{TTRL} \citep{zuo2025ttrl}, which rewards rollouts yielding the majority answer;
(2) \textbf{Tok-entropy} \citep{agarwal2025unreasonable}, which rewards rollouts with lower average token entropy;
(3) \textbf{Seq-entropy} \citep{agarwal2025unreasonable}, which rewards rollouts with higher sequence probability;
(4) \textbf{Self-certainty} \citep{zhao2025learning}, which rewards rollouts with a higher KL divergence of token distributions from the uniform distribution;
(5) \textbf{Co-rewarding} \citep{zhang2025co}, which rewards rollouts based on pseudo-labels generated by a slowly-updated reference teacher;
(6) \textbf{TraPO} \citep{yang2025trapo}, a semi-supervised baseline that selects unlabeled data whose pass-rate trajectories align closest with those of labeled samples.
Baselines (1)–(5) are evaluated under a purely unsupervised setting, and further extended alongside (6) to a semi-supervised regime, where labeled samples are rewarded by correctness, while unlabeled ones rely on their respective self-guided rewards.

\begin{figure*}[ht]
	\centering
    \subfigure[Annotation Scaling\label{fig:label_cost}]{
		\includegraphics[width=0.372\textwidth]{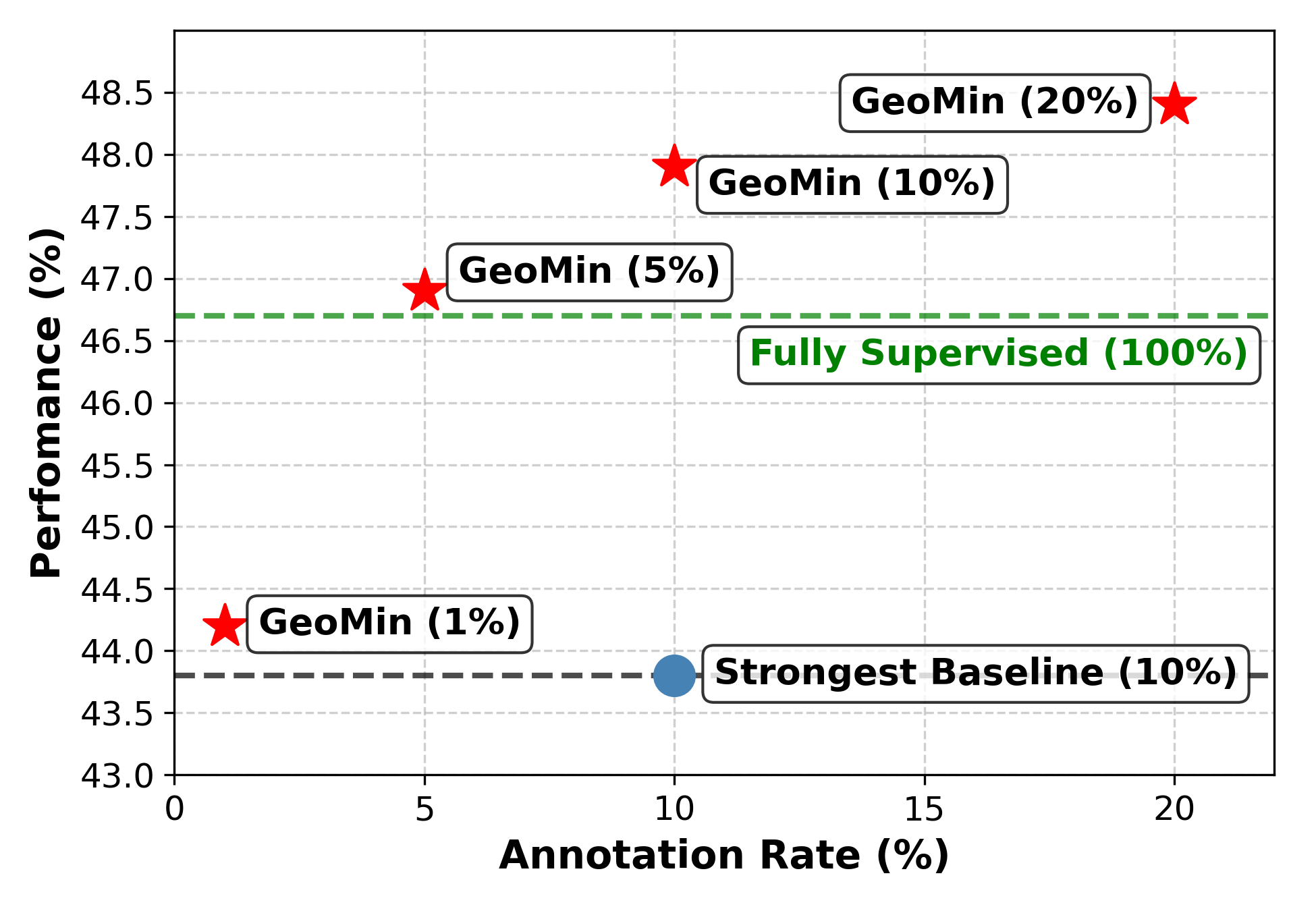}
	}
    \subfigure[Unlabeled Utilization\label{fig:f1_compare}]{
		\includegraphics[width=0.265\textwidth]{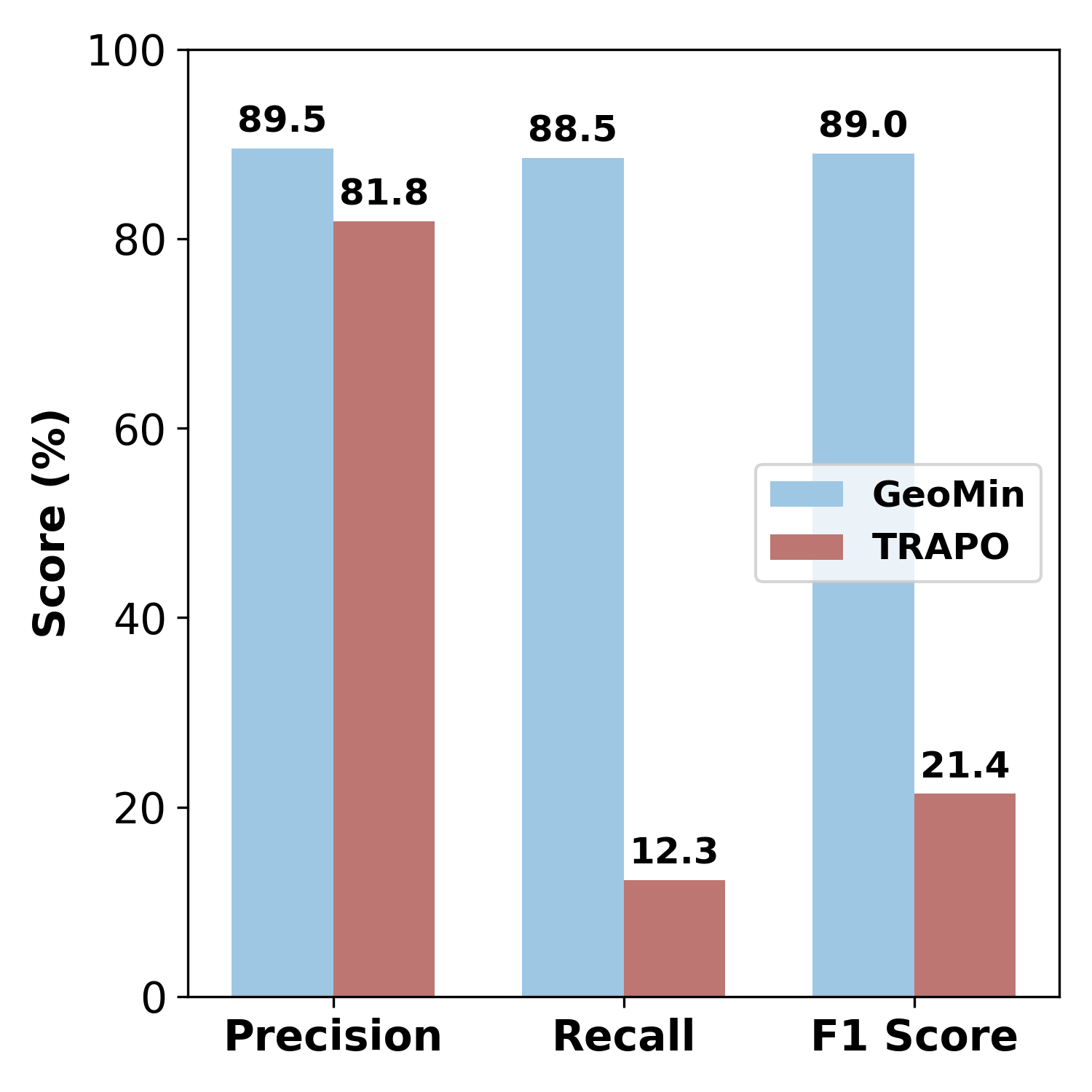}
	}
    \subfigure[Component Ablation\label{fig:key_ablation}]{
		\includegraphics[width=0.315\textwidth]{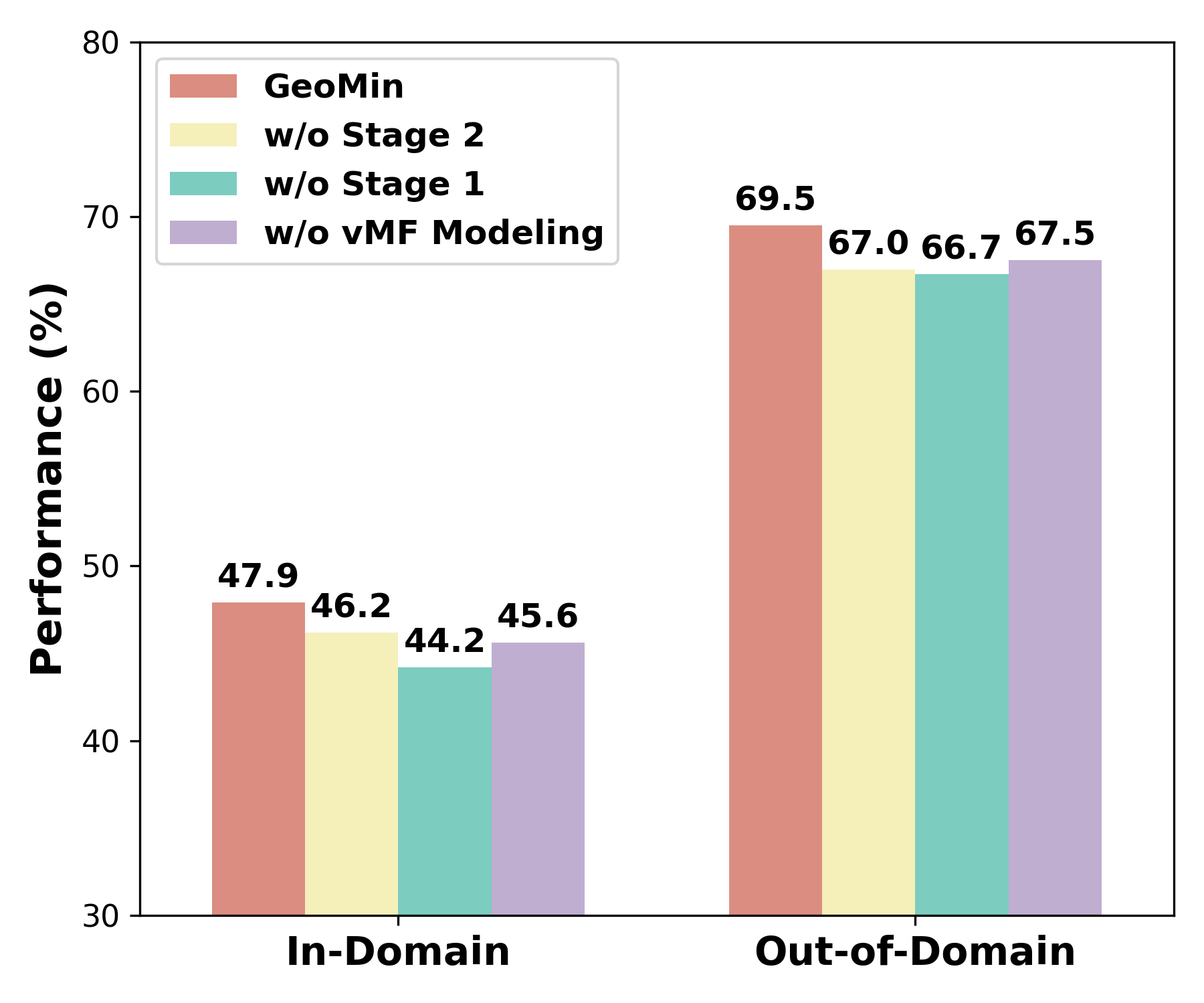}
	}
   \caption{\small (a) Performance (ID) of GeoMin across varying annotation rates. 
   (b) Precision, recall, and F1 score calculated on the reliable unlabeled samples selected by TraPO and GeoMin. 
   (c) Key component ablation study on ID and OOD tasks.}
   \label{fig:analysis_1}
\end{figure*}

\subsection{Main Results}
As shown in Table~\ref{tab:main_results}, GeoMin consistently outperforms all baselines with absolute gains of $+4.1\%$ and $+1.7\%$ on ID and OOD average performance over the strongest competitors.
Notably, TraPO exhibits marginal advantages over the vanilla TTRL baseline; it only yields a minor $+0.6\%$ gain on ID average while suffering a $-0.6\%$ drop on OOD average.
This can be largely attributed to its underutilization of unlabeled data, which severely caps its scaling potential and performance ceiling.
In contrast, GeoMin precisely and comprehensively unearths the wealth of unlabeled data via geometric resonance, enabling the semi-supervised RLVR paradigm to truly live up to its promise.

Most strikingly, with a mere $10\%$ annotation rate, GeoMin outperforms the fully-supervised baseline on ID performance by $+1.2\%$, while maintaining competitive OOD performance ($69.5\%$ vs. $70.0\%$). 
We attribute this to the robust representation space learned through the synergy between our two training stages, which contrasts with the indiscriminate fitting of full supervision. 
Specifically, Stage 1 focuses optimization on confounded boundary samples to promote representation separation and clear up decision boundaries. 
Stage 2 selects unlabeled instances that match the calibrated distributions to progressively refine and enrich the representation space. 
Ultimately, by combining initial boundary separation with sequential distribution refinement, GeoMin internalizes intrinsic logical structures, leading to superior generalization with exceptional annotation efficiency.

\subsection{Further Analysis}
\paragraph{Efficacy in Annotation Mitigation.}
To quantify the efficacy of GeoMin in minimizing annotation dependency, we analyze its performance scaling behaviors under varying label ratios. 
As illustrated in Figure~\ref{fig:label_cost}, GeoMin exhibits a consistent and monotonic performance scaling as the label ratio increases, reaching $44.2\%$, $46.9\%$, $47.9\%$, and $48.4\%$ accuracy at $1\%$, $5\%$, $10\%$, and $20\%$ annotations, respectively. 
This steady upward trend underscores the scaling potential and robustness of our framework.
Moreover, GeoMin delivers remarkable data efficiency: a mere $1\%$ of annotations already surpasses the strongest baseline TraPO with $10\%$ labels, while only $5\%$ of annotations are required to match the fully supervised baseline utilizing $100\%$ labels.
These results demonstrate GeoMin's capacity to bypass expensive annotation barriers without compromising and even boosting the optimization performance.

\begin{figure*}[ht]
	\centering
    \subfigure[Step 0\label{fig:vmf_init}]{
		\includegraphics[width=0.248\textwidth]{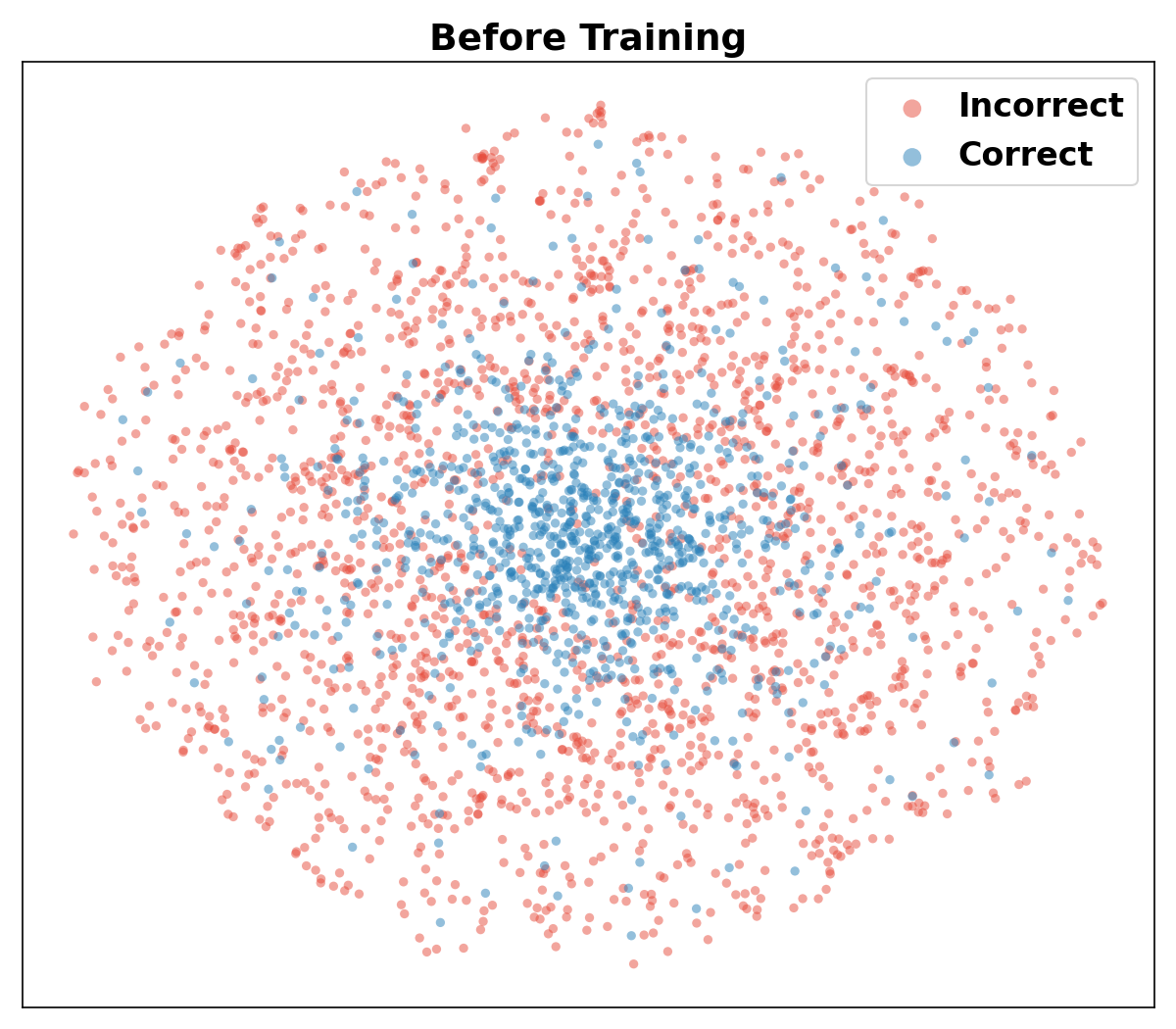}
	}
    \subfigure[Step 100 (w/o BD)\label{fig:vmf_wo_bd}]{
		\includegraphics[width=0.248\textwidth]{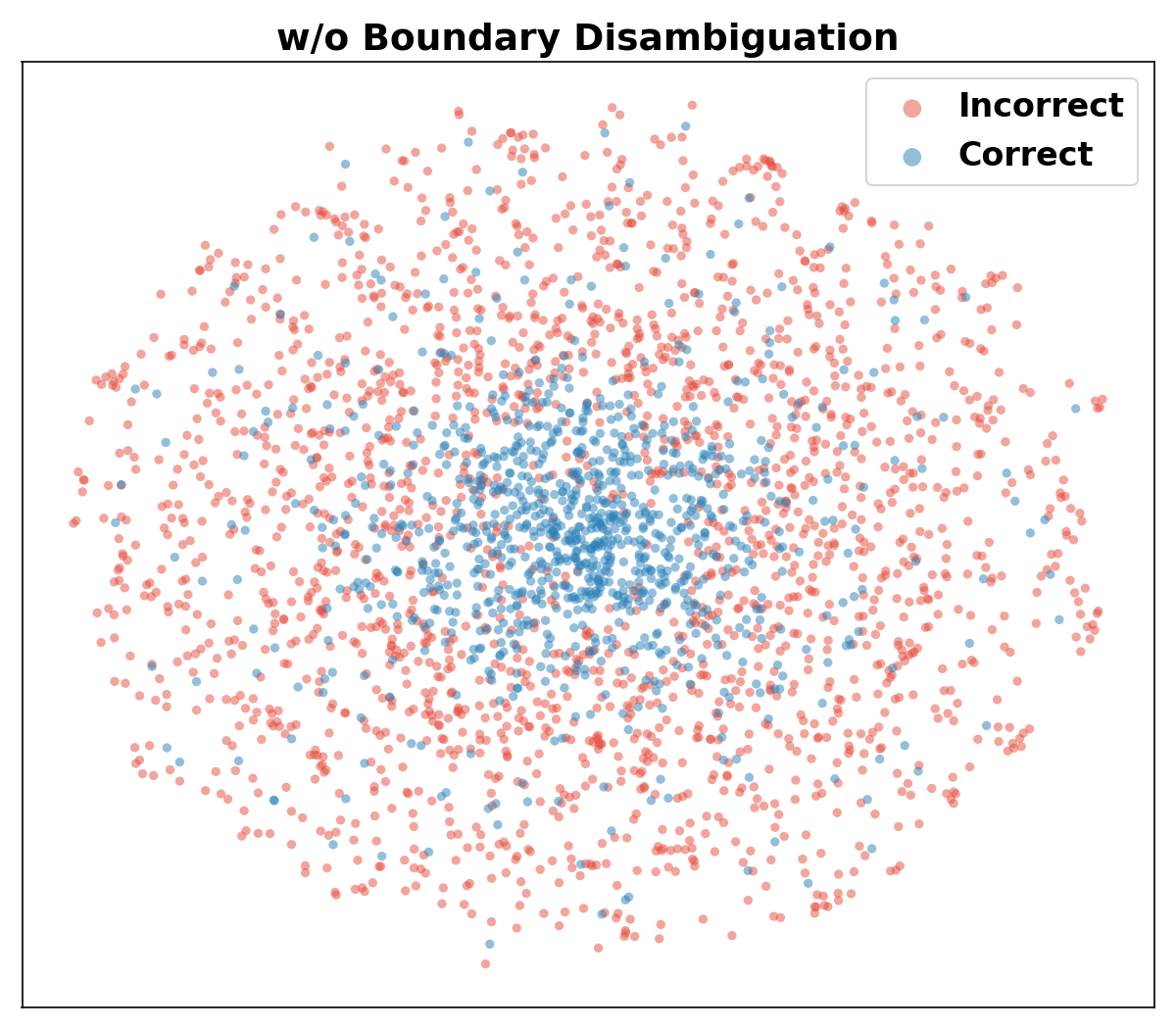}
	}
    \subfigure[Step 100 (w/ BD)\label{fig:vmf_w_bd}]{
		\includegraphics[width=0.248\textwidth]{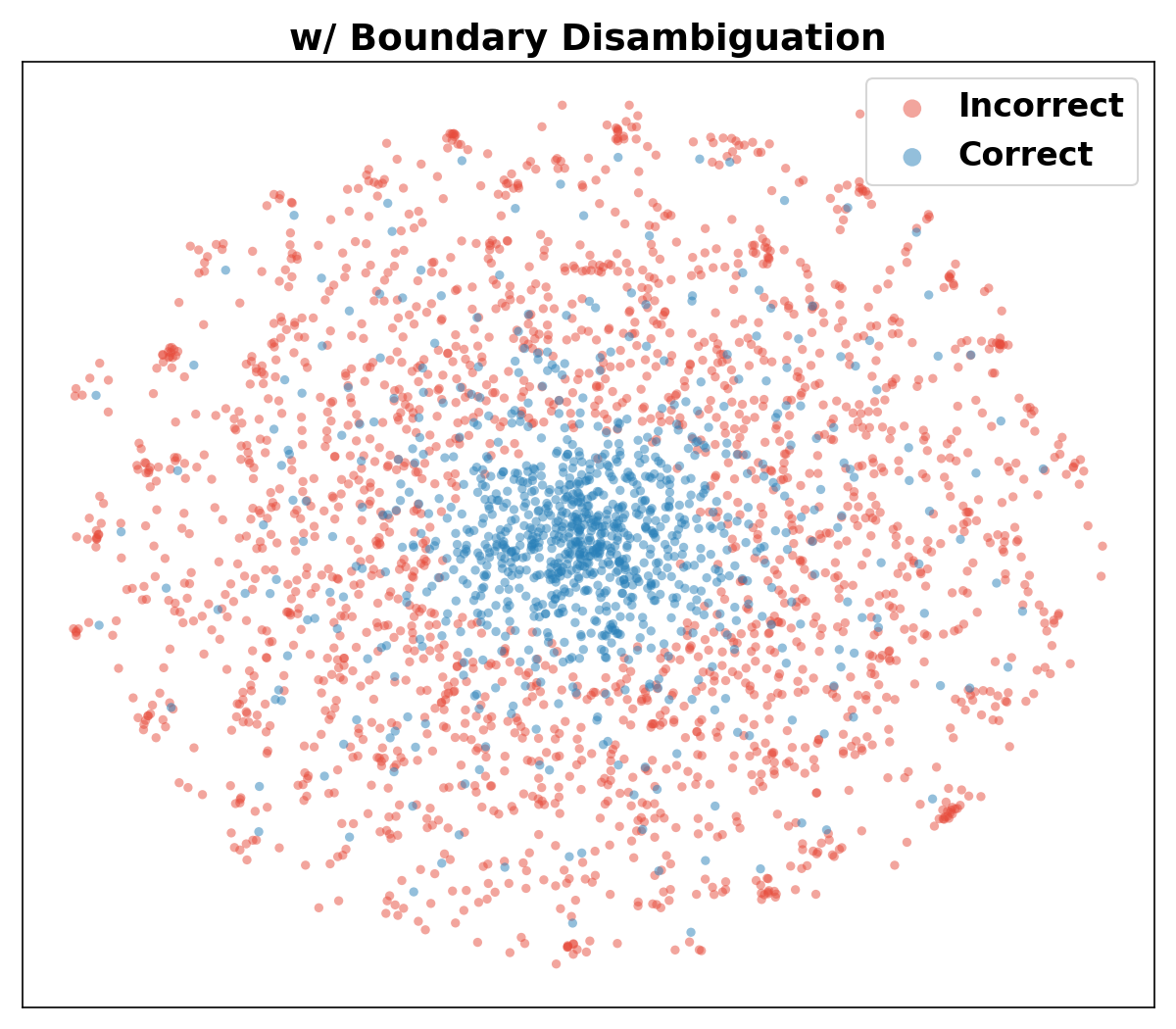}
	}
    \subfigure[Time Profiling\label{fig:timing}]{
		\includegraphics[width=0.191\textwidth]{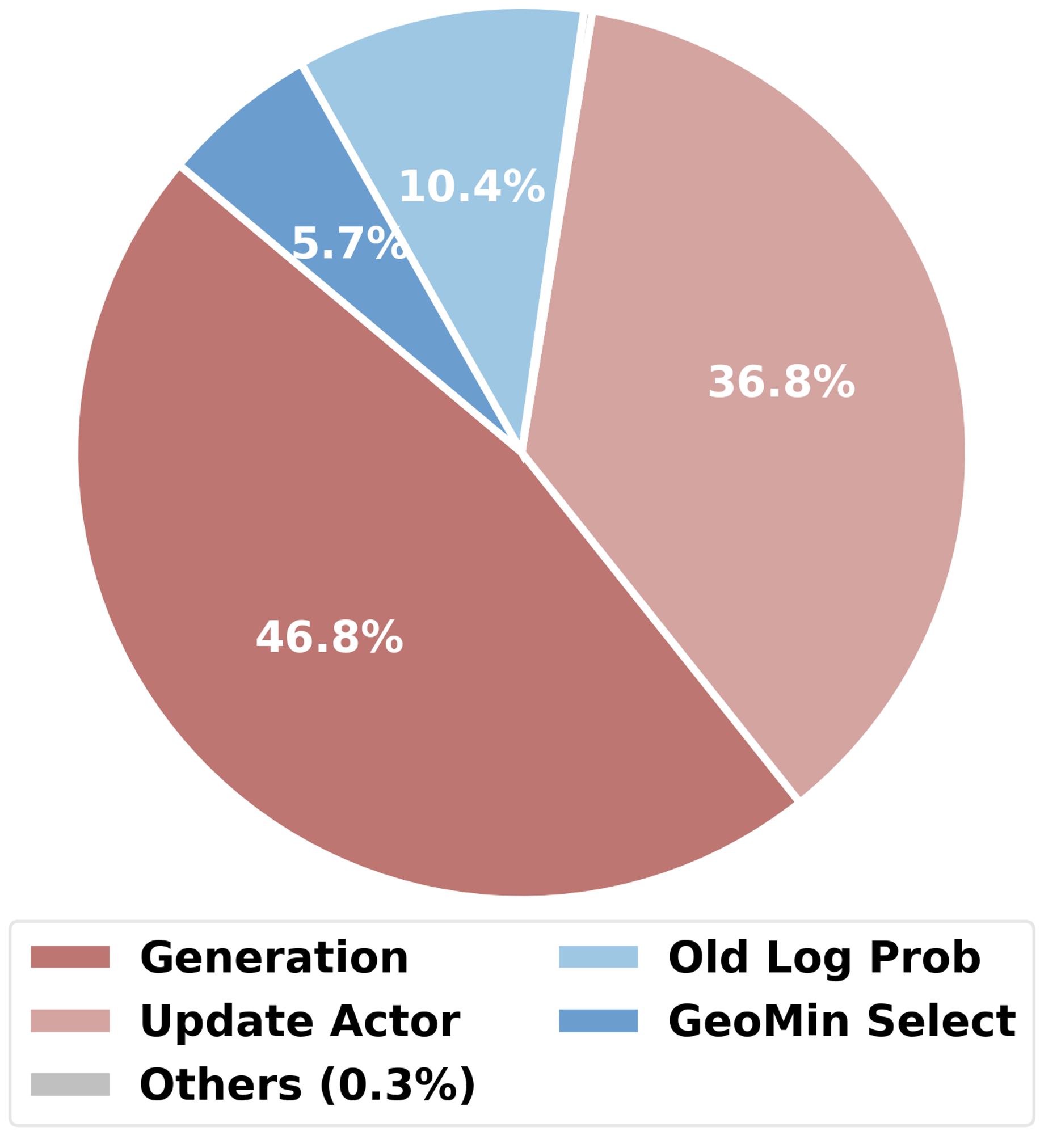}
	}
   \caption{\small (a--c) T-SNE visualizations of vMF distributions for correct and incorrect rollouts across different stages (initial status, 100 steps with/without boundary disambiguation). (d) Training time allocation across different operational phases.}
   \label{fig:analysis_2}
\end{figure*}

\paragraph{Efficacy in Unlabeled Data Mining.}
To deeply understand how GeoMin unlocks the potential of unlabeled data, we evaluate the quality of selected pseudo-labels and compare them against TraPO.
As shown in Figure~\ref{fig:f1_compare}, TraPO suffers from a severe recall bottleneck, achieving a precision of $81.8\%$ but a recall of only $12.3\%$ ($21.4\%$ F1 score). 
This confirms that its strict trajectory-matching mechanism overlooks abundant reliable samples evolving along alternative paths. 
In sharp contrast, GeoMin effectively breaks this limitation, achieving a precision of $89.5\%$ alongside a phenomenal recall of $88.5\%$, culminating in an F1 score of $89.0\%$. 
These results indicate that GeoMin establishes a more fundamental connection between labeled and unlabeled data through distributional modeling, significantly enhancing unlabeled sample utilization without sacrificing pseudo-label quality.

\paragraph{Efficacy in Boundary Disambiguation.}
To intuitively verify our boundary disambiguation mechanism, Figure~\ref{fig:analysis_2} presents the T-SNE visualizations of vMF distributions for correct and incorrect rollouts. 
Initially (Figure~\ref{fig:vmf_init}), the embeddings are intertwined, reflecting an ambiguous decision boundary. 
After $100$ optimization steps \textit{w/o Boundary Disambiguation} (Figure~\ref{fig:vmf_wo_bd}), the two distributions still exhibit substantial overlap despite a marginal contraction of correct samples. Conversely, with \textit{Boundary Disambiguation} (Figure~\ref{fig:vmf_w_bd}), they become noticeably more distinguished, tending towards a well-defined decision boundary. 
This contrast validates our design: by leveraging advantage reweighting, GeoMin prioritizes learning critical boundary samples, sharpening the discriminative boundary in the underlying representation space.

\paragraph{Key Component Ablation.}
We perform a comprehensive ablation study to validate the design of GeoMin (Figure~\ref{fig:key_ablation}). 
First, \textit{w/o Stage 2} shows limited efficacy, dropping by $1.7\%$ ID and $2.5\%$ OOD performance, which confirms the necessity of mining and effectively exploiting unlabeled data.
Moreover, \textit{w/o Stage 1} incurs the sharpest performance drop, with ID and OOD accuracies plunging by $3.7\%$ and $2.8\%$, respectively.
This indicates that incorporating unlabeled samples without establishing geometric discriminability introduces highly noisy self-guided rewards.
Lastly, \textit{w/o vMF Modeling} replaces our distribution-based similarity with the naive cosine similarity against mean directions, degrading performance by $2.3\%$ ID and $2.0\%$ OOD, highlighting the criticality of global distribution modeling for robust confidence estimation.

\paragraph{Computational Efficiency Analysis.}
To evaluate the computational overhead introduced by GeoMin, we profile the training time allocation across different operational phases in Figure~\ref{fig:timing}. 
The breakdown reveals that trajectory rollouts (\textit{Generation}) and policy optimization (\textit{Update Actor}) dominate the training process, consuming $46.8\%$ and $36.8\%$ of the total time, respectively. 
In contrast, our vMF-based global distribution modeling and sample filtering require merely $5.7\%$ of the training duration. 
This marginal footprint highlights the efficiency of GeoMin, securing substantial performance gains with negligible algorithmic overhead.

\section{Related Work}
\noindent \textbf{Weakly-supervised RLVR} alleviates the dependency on costly ground-truth annotations through two primary paradigms: unsupervised and semi-supervised RLVR.
\textit{Unsupervised RLVR} derives rewards directly from internal model states. Specifically, confidence-based approaches \citep{agarwal2025unreasonable, zhao2025learning, van2025post, prabhudesai2025maximizing, li2025confidence} equate model certainty with correctness, while ensemble-based methods \citep{yuan2025wisdom, zuo2025ttrl, shafayat2025can, liu2025ettrl, fang2026serl, zhang2025co,  zhang2026right, zhang2026consistent} leverage rollout consensus as a proxy for truth. However, lacking factual grounding, these intrinsic signals often trigger reward hacking and late-stage training collapse \citep{he2026far}.
\textit{Semi-supervised RLVR} mitigates this by using a small labeled set to anchor unlabeled data filtering. Recent work, such as \citep{yang2025trapo}, aligns individual trajectory pass-rates with collective statistics. Yet, such macro heuristics overlook inherent individual fluctuations, causing a ``recall bottleneck'' that excludes valuable samples. Instead, we uncover the geometric properties of latent representations for high-precision and comprehensive sample mining.

\noindent\textbf{Semi-supervised learning (SSL)} mitigates label scarcity by leveraging unlabeled data. Most contemporary SSL approaches rest on two paradigms: consistency regularization \citep{miyato2018virtual,ke2019dual}, enforcing stable predictions under perturbations, and pseudo-labeling \citep{zhang2020detecting,cascante2021curriculum,han2026dicap}, treating confident predictions as targets. FixMatch \citep{sohn2020fixmatch} seamlessly integrates both via consistency across weakly and strongly augmented views, while subsequent advances \citep{zhang2021flexmatch,wang2022freematch,chen2023softmatch,li2024semireward,cheng2025cgmatch} adapt thresholds to balance label reliability and coverage. However, these classification-tailored methods rely heavily on surface-level output probabilities, which inherently fail to capture the complex, multi-step reasoning trajectories in RLVR. This limitation motivates us to look beyond token-level text outputs and leverage representation-level geometric alignment.

\section{Conclusion}
In this work, we propose GeoMin, a novel semi-supervised RLVR framework that fully unleashes the potential of unlabeled data through geometry-guided sample mining.
The core insight of GeoMin is that the underlying structure of latent representations exhibits a natural geometric resonance between labeled and unlabeled samples.
Specifically, this resonance allows vMF-modeled directional distributions to decode the structural discrepancy between correct and incorrect rollouts, establishing a robust prior for adaptive sample mining.
Strikingly, GeoMin surpasses the fully supervised baseline with merely $10\%$ annotation overhead, paving the way toward scalable, data-efficient RLVR.

\section*{Limitations}
Despite its effectiveness, GeoMin has several limitations. 
First, due to constrained computational resources, our empirical validation was primarily conducted on models scaling up to 8B parameters. Evaluating GeoMin on larger frontier models remains an essential next step. 
Second, GeoMin's training efficiency is tied to the model's initial representation quality. When applied to models with lower geometric discriminability at the outset, GeoMin requires a prolonged Stage 1 training phase to fully activate and align the latent geometry, introducing a trade-off between initial model readiness and training convergence speed.
Finally, our current evaluation focuses on textual and symbolic reasoning tasks within RLVR. Generalizing this geometric framework to more intricate latent landscapes, such as multimodal or cross-lingual reasoning, warrants further investigation.

\section*{Ethics Statement}
We have carefully reviewed the ethical implications of our work in accordance with the ACL Code of Ethics. As our proposed method is evaluated exclusively on standard, publicly available benchmarks for logical reasoning and utilizes no human-centric, private, or sensitive data, this work does not raise any ethical concerns, malicious content risks, or negative societal impacts.

% Bibliography entries for the entire Anthology, followed by custom entries
%\bibliography{custom,anthology-overleaf-1,anthology-overleaf-2}

% Custom bibliography entries only
\bibliography{custom}

@article{jaech2024openai,
  title={Openai o1 system card},
  author={Jaech, Aaron and Kalai, Adam and Lerer, Adam and Richardson, Adam and El-Kishky, Ahmed and Low, Aiden and Helyar, Alec and Madry, Aleksander and Beutel, Alex and Carney, Alex and others},
  journal={arXiv preprint arXiv:2412.16720},
  year={2024}
}

@article{guo2025deepseek,
  title={DeepSeek-R1 incentivizes reasoning in LLMs through reinforcement learning},
  author={Guo, Daya and Yang, Dejian and Zhang, Haowei and Song, Junxiao and Wang, Peiyi and Zhu, Qihao and Xu, Runxin and Zhang, Ruoyu and Ma, Shirong and Bi, Xiao and others},
  journal={Nature},
  volume={645},
  number={8081},
  pages={633--638},
  year={2025},
  publisher={Nature Publishing Group UK London}
}

@article{comanici2025gemini,
  title={Gemini 2.5: Pushing the frontier with advanced reasoning, multimodality, long context, and next generation agentic capabilities},
  author={Comanici, Gheorghe and Bieber, Eric and Schaekermann, Mike and Pasupat, Ice and Sachdeva, Noveen and Dhillon, Inderjit and Blistein, Marcel and Ram, Ori and Zhang, Dan and Rosen, Evan and others},
  journal={arXiv preprint arXiv:2507.06261},
  year={2025}
}

@article{yang2025qwen3,
  title={Qwen3 technical report},
  author={Yang, An and Li, Anfeng and Yang, Baosong and Zhang, Beichen and Hui, Binyuan and Zheng, Bo and Yu, Bowen and Gao, Chang and Huang, Chengen and Lv, Chenxu and others},
  journal={arXiv preprint arXiv:2505.09388},
  year={2025}
}

@article{wen2025reinforcement,
  title={Reinforcement learning with verifiable rewards implicitly incentivizes correct reasoning in base llms},
  author={Wen, Xumeng and Liu, Zihan and Zheng, Shun and Ye, Shengyu and Wu, Zhirong and Wang, Yang and Xu, Zhijian and Liang, Xiao and Li, Junjie and Miao, Ziming and others},
  journal={arXiv preprint arXiv:2506.14245},
  year={2025}
}

@article{zhang2025survey,
  title={A survey of reinforcement learning for large reasoning models},
  author={Zhang, Kaiyan and Zuo, Yuxin and He, Bingxiang and Sun, Youbang and Liu, Runze and Jiang, Che and Fan, Yuchen and Tian, Kai and Jia, Guoli and Li, Pengfei and others},
  journal={arXiv preprint arXiv:2509.08827},
  year={2025}
}

@article{su2025crossing,
  title={Crossing the reward bridge: Expanding rl with verifiable rewards across diverse domains},
  author={Su, Yi and Yu, Dian and Song, Linfeng and Li, Juntao and Mi, Haitao and Tu, Zhaopeng and Zhang, Min and Yu, Dong},
  journal={arXiv preprint arXiv:2503.23829},
  year={2025}
}

@article{he2026far,
  title={How Far Can Unsupervised RLVR Scale LLM Training?},
  author={He, Bingxiang and Zuo, Yuxin and Liu, Zeyuan and Zhao, Shangziqi and Fu, Zixuan and Yang, Junlin and Qian, Cheng and Zhang, Kaiyan and Fan, Yuchen and Cui, Ganqu and others},
  journal={arXiv preprint arXiv:2603.08660},
  year={2026}
}

@article{zhao2026absolute,
  title={Absolute zero: Reinforced self-play reasoning with zero data},
  author={Zhao, Andrew and Wu, Yiran and Wu, Tong and Xu, Quentin and Yue, Yang and Lin, Matthieu and Wang, Shenzhi and Wu, Qingyun and Zheng, Zilong and Huang, Gao},
  journal={Advances in Neural Information Processing Systems},
  volume={38},
  pages={105816--105879},
  year={2026}
}

@article{zuo2025ttrl,
  title={Ttrl: Test-time reinforcement learning},
  author={Zuo, Yuxin and Zhang, Kaiyan and Sheng, Li and Qu, Shang and Cui, Ganqu and Zhu, Xuekai and Li, Haozhan and Zhang, Yuchen and Long, Xinwei and Hua, Ermo and others},
  journal={arXiv preprint arXiv:2504.16084},
  year={2025}
}

@article{agarwal2025unreasonable,
  title={The unreasonable effectiveness of entropy minimization in llm reasoning},
  author={Agarwal, Shivam and Zhang, Zimin and Yuan, Lifan and Han, Jiawei and Peng, Hao},
  journal={arXiv preprint arXiv:2505.15134},
  year={2025}
}

@article{zhao2025learning,
  title={Learning to reason without external rewards},
  author={Zhao, Xuandong and Kang, Zhewei and Feng, Aosong and Levine, Sergey and Song, Dawn},
  journal={arXiv preprint arXiv:2505.19590},
  year={2025}
}

@article{zhang2025co,
  title={Co-rewarding: Stable Self-supervised RL for Eliciting Reasoning in Large Language Models},
  author={Zhang, Zizhuo and Zhu, Jianing and Ge, Xinmu and Zhao, Zihua and Zhou, Zhanke and Li, Xuan and Feng, Xiao and Yao, Jiangchao and Han, Bo},
  journal={arXiv preprint arXiv:2508.00410},
  year={2025}
}

@article{shafayat2025can,
  title={Can Large Reasoning Models Self-Train?},
  author={Shafayat, Sheikh and Tajwar, Fahim and Salakhutdinov, Ruslan and Schneider, Jeff and Zanette, Andrea},
  journal={arXiv preprint arXiv:2505.21444},
  year={2025}
}

@article{zhang2025no,
  title={No Free Lunch: Rethinking Internal Feedback for LLM Reasoning},
  author={Zhang, Yanzhi and Zhang, Zhaoxi and Guan, Haoxiang and Cheng, Yilin and Duan, Yitong and Wang, Chen and Wang, Yue and Zheng, Shuxin and He, Jiyan},
  journal={arXiv preprint arXiv:2506.17219},
  year={2025}
}

@article{yang2025trapo,
  title={TraPO: A Semi-Supervised Reinforcement Learning Framework for Boosting LLM Reasoning},
  author={Yang, Shenzhi and Zhu, Guangcheng and Zheng, Xing and MA, Yingfan and Chen, Zhongqi and Song, Bowen and Wang, Weiqiang and Zhao, Junbo and Chen, Gang and Wang, Haobo},
  journal={arXiv preprint arXiv:2512.13106},
  year={2025}
}

@article{xie2026controlled,
  title={Controlled llm training on spectral sphere},
  author={Xie, Tian and Luo, Haoming and Tang, Haoyu and Hu, Yiwen and Liu, Jason Klein and Ren, Qingnan and Wang, Yang and Zhao, Wayne Xin and Yan, Rui and Su, Bing and others},
  journal={arXiv preprint arXiv:2601.08393},
  year={2026}
}

@article{fu2026nemotron,
  title={Nemotron-flash: Towards latency-optimal hybrid small language models},
  author={Fu, Yonggan and Dong, Xin and Diao, Shizhe and Ye, Hanrong and Byeon, Wonmin and Karnati, Yashaswi and Liebenwein, Lucas and Khadkevich, Maksim and Keller, Alexander and Kautz, Jan and others},
  journal={Advances in Neural Information Processing Systems},
  volume={38},
  pages={161342--161366},
  year={2026}
}

@article{zhang2019root,
  title={Root mean square layer normalization},
  author={Zhang, Biao and Sennrich, Rico},
  journal={Advances in neural information processing systems},
  volume={32},
  year={2019}
}

@article{shao2024deepseekmath,
  title={Deepseekmath: Pushing the limits of mathematical reasoning in open language models},
  author={Shao, Zhihong and Wang, Peiyi and Zhu, Qihao and Xu, Runxin and Song, Junxiao and Bi, Xiao and Zhang, Haowei and Zhang, Mingchuan and Li, YK and Wu, Yang and others},
  journal={arXiv preprint arXiv:2402.03300},
  year={2024}
}

@article{du2024probabilistic,
  title={Probabilistic contrastive learning for long-tailed visual recognition},
  author={Du, Chaoqun and Wang, Yulin and Song, Shiji and Huang, Gao},
  journal={IEEE Transactions on Pattern Analysis and Machine Intelligence},
  year={2024},
  publisher={IEEE}
}

@article{sra2012short,
  title={A short note on parameter approximation for von Mises-Fisher distributions: and a fast implementation of I s (x)},
  author={Sra, Suvrit},
  journal={Computational Statistics},
  volume={27},
  pages={177--190},
  year={2012},
  publisher={Springer}
}

@article{he2025deepmath,
  title={Deepmath-103k: A large-scale, challenging, decontaminated, and verifiable mathematical dataset for advancing reasoning},
  author={He, Zhiwei and Liang, Tian and Xu, Jiahao and Liu, Qiuzhi and Chen, Xingyu and Wang, Yue and Song, Linfeng and Yu, Dian and Liang, Zhenwen and Wang, Wenxuan and others},
  journal={arXiv preprint arXiv:2504.11456},
  year={2025}
}

@inproceedings{sheng2025hybridflow,
  title={Hybridflow: A flexible and efficient rlhf framework},
  author={Sheng, Guangming and Zhang, Chi and Ye, Zilingfeng and Wu, Xibin and Zhang, Wang and Zhang, Ru and Peng, Yanghua and Lin, Haibin and Wu, Chuan},
  booktitle={Proceedings of the Twentieth European Conference on Computer Systems},
  pages={1279--1297},
  year={2025}
}

@article{yan2025learning,
  title={Learning to reason under off-policy guidance},
  author={Yan, Jianhao and Li, Yafu and Hu, Zican and Wang, Zhi and Cui, Ganqu and Qu, Xiaoye and Cheng, Yu and Zhang, Yue},
  journal={arXiv preprint arXiv:2504.14945},
  year={2025}
}

@article{li2024numinamath,
  title={Numinamath: The largest public dataset in ai4maths with 860k pairs of competition math problems and solutions},
  author={Li, Jia and Beeching, Edward and Tunstall, Lewis and Lipkin, Ben and Soletskyi, Roman and Huang, Shengyi and Rasul, Kashif and Yu, Longhui and Jiang, Albert Q and Shen, Ziju and others},
  journal={Hugging Face repository},
  volume={13},
  number={9},
  pages={9},
  year={2024}
}

@inproceedings{he2024olympiadbench,
  title={Olympiadbench: A challenging benchmark for promoting agi with olympiad-level bilingual multimodal scientific problems},
  author={He, Chaoqun and Luo, Renjie and Bai, Yuzhuo and Hu, Shengding and Thai, Zhen and Shen, Junhao and Hu, Jinyi and Han, Xu and Huang, Yujie and Zhang, Yuxiang and others},
  booktitle={Proceedings of the 62nd Annual Meeting of the Association for Computational Linguistics (Volume 1: Long Papers)},
  pages={3828--3850},
  year={2024}
}

@article{lewkowycz2022solving,
  title={Solving quantitative reasoning problems with language models},
  author={Lewkowycz, Aitor and Andreassen, Anders and Dohan, David and Dyer, Ethan and Michalewski, Henryk and Ramasesh, Vinay and Slone, Ambrose and Anil, Cem and Schlag, Imanol and Gutman-Solo, Theo and others},
  journal={Advances in neural information processing systems},
  volume={35},
  pages={3843--3857},
  year={2022}
}

@article{hendrycks2021measuring,
  title={Measuring mathematical problem solving with the math dataset},
  author={Hendrycks, Dan and Burns, Collin and Kadavath, Saurav and Arora, Akul and Basart, Steven and Tang, Eric and Song, Dawn and Steinhardt, Jacob},
  journal={arXiv preprint arXiv:2103.03874},
  year={2021}
}

@article{clark2018think,
  title={Think you have solved question answering? try arc, the ai2 reasoning challenge},
  author={Clark, Peter and Cowhey, Isaac and Etzioni, Oren and Khot, Tushar and Sabharwal, Ashish and Schoenick, Carissa and Tafjord, Oyvind},
  journal={arXiv preprint arXiv:1803.05457},
  year={2018}
}

@inproceedings{rein2024gpqa,
  title={Gpqa: A graduate-level google-proof q\&a benchmark},
  author={Rein, David and Hou, Betty Li and Stickland, Asa Cooper and Petty, Jackson and Pang, Richard Yuanzhe and Dirani, Julien and Michael, Julian and Bowman, Samuel R},
  booktitle={First Conference on Language Modeling},
  year={2024}
}

@article{wang2024mmlu,
  title={Mmlu-pro: A more robust and challenging multi-task language understanding benchmark},
  author={Wang, Yubo and Ma, Xueguang and Zhang, Ge and Ni, Yuansheng and Chandra, Abhranil and Guo, Shiguang and Ren, Weiming and Arulraj, Aaran and He, Xuan and Jiang, Ziyan and others},
  journal={Advances in Neural Information Processing Systems},
  volume={37},
  pages={95266--95290},
  year={2024}
}

@article{prabhudesai2025maximizing,
  title={Maximizing confidence alone improves reasoning},
  author={Prabhudesai, Mihir and Chen, Lili and Ippoliti, Alex and Fragkiadaki, Katerina and Liu, Hao and Pathak, Deepak},
  journal={arXiv preprint arXiv:2505.22660},
  year={2025}
}

@article{li2025confidence,
  title={Confidence is all you need: Few-shot rl fine-tuning of language models},
  author={Li, Pengyi and Skripkin, Matvey and Zubrey, Alexander and Kuznetsov, Andrey and Oseledets, Ivan},
  journal={arXiv preprint arXiv:2506.06395},
  year={2025}
}

@article{van2025post,
  title={Post-training large language models via reinforcement learning from self-feedback},
  author={van Niekerk, Carel and Vukovic, Renato and Ruppik, Benjamin Matthias and Lin, Hsien-chin and Ga{\v{s}}i{\'c}, Milica},
  journal={arXiv preprint arXiv:2507.21931},
  year={2025}
}

@article{liu2025ettrl,
  title={Ettrl: Balancing exploration and exploitation in llm test-time reinforcement learning via entropy mechanism},
  author={Liu, Jia and He, ChangYi and Lin, YingQiao and Yang, MingMin and Shen, FeiYang and Liu, ShaoGuo},
  journal={arXiv preprint arXiv:2508.11356},
  year={2025}
}

@article{fang2026serl,
  title={Serl: Self-play reinforcement learning for large language models with limited data},
  author={Fang, Wenkai and Liu, Shunyu and Zhou, Yang and Zhang, Kongcheng and Zheng, Tongya and Chen, Kaixuan and Song, Mingli and Tao, Dacheng},
  journal={Advances in Neural Information Processing Systems},
  volume={38},
  pages={103706--103738},
  year={2026}
}

@article{yuan2025wisdom,
  title={Wisdom of the Crowd: Reinforcement Learning from Coevolutionary Collective Feedback},
  author={Yuan, Wenzhen and Tang, Shengji and Lin, Weihao and Ruan, Jiacheng and Cui, Ganqu and Zhang, Bo and Chen, Tao and Liu, Ting and Fu, Yuzhuo and Ye, Peng and others},
  journal={arXiv preprint arXiv:2508.12338},
  year={2025}
}

@article{zhang2026right,
  title={Right question is already half the answer: Fully unsupervised llm reasoning incentivization},
  author={Zhang, Qingyang and Wu, Haitao and Zhang, Changqing and Zhao, Peilin and Bian, Yatao},
  journal={Advances in neural information processing systems},
  volume={38},
  pages={67345--67372},
  year={2026}
}

@article{zhang2026consistent,
  title={Consistent paths lead to truth: Self-rewarding reinforcement learning for llm reasoning},
  author={Zhang, Kongcheng and Yao, Qi and Liu, Shunyu and Wang, Yingjie and Lai, Baisheng and Ye, Jieping and Song, Mingli and Tao, Dacheng},
  journal={Advances in Neural Information Processing Systems},
  volume={38},
  pages={59849--59887},
  year={2026}
}

@article{miyato2018virtual,
  title={Virtual adversarial training: a regularization method for supervised and semi-supervised learning},
  author={Miyato, Takeru and Maeda, Shin-ichi and Koyama, Masanori and Ishii, Shin},
  journal={IEEE transactions on pattern analysis and machine intelligence},
  volume={41},
  number={8},
  pages={1979--1993},
  year={2018},
  publisher={IEEE}
}

@inproceedings{ke2019dual,
  title={Dual student: Breaking the limits of the teacher in semi-supervised learning},
  author={Ke, Zhanghan and Wang, Daoye and Yan, Qiong and Ren, Jimmy and Lau, Rynson WH},
  booktitle={Proceedings of the IEEE/CVF international conference on computer vision},
  pages={6728--6736},
  year={2019}
}

@article{zhang2020detecting,
  title={Detecting false data injection attacks in smart grids: A semi-supervised deep learning approach},
  author={Zhang, Ying and Wang, Jianhui and Chen, Bo},
  journal={IEEE Transactions on Smart Grid},
  volume={12},
  number={1},
  pages={623--634},
  year={2020},
  publisher={IEEE}
}

@inproceedings{cascante2021curriculum,
  title={Curriculum labeling: Revisiting pseudo-labeling for semi-supervised learning},
  author={Cascante-Bonilla, Paola and Tan, Fuwen and Qi, Yanjun and Ordonez, Vicente},
  booktitle={Proceedings of the AAAI conference on artificial intelligence},
  volume={35},
  number={8},
  pages={6912--6920},
  year={2021}
}

@inproceedings{han2026dicap,
  title={DiCaP: Distribution-Calibrated Pseudo-labeling for Semi-Supervised Multi-Label Learning},
  author={Han, Bo and Li, Zhuoming and Wang, Xiaoyu and Hou, Yaxin and Liu, Hui and Hou, Junhui and Jia, Yuheng},
  booktitle={Proceedings of the AAAI Conference on Artificial Intelligence},
  volume={40},
  number={26},
  pages={21540--21548},
  year={2026}
}

@article{sohn2020fixmatch,
  title={Fixmatch: Simplifying semi-supervised learning with consistency and confidence},
  author={Sohn, Kihyuk and Berthelot, David and Carlini, Nicholas and Zhang, Zizhao and Zhang, Han and Raffel, Colin A and Cubuk, Ekin Dogus and Kurakin, Alexey and Li, Chun-Liang},
  journal={Advances in neural information processing systems},
  volume={33},
  pages={596--608},
  year={2020}
}

@article{zhang2021flexmatch,
  title={Flexmatch: Boosting semi-supervised learning with curriculum pseudo labeling},
  author={Zhang, Bowen and Wang, Yidong and Hou, Wenxin and Wu, Hao and Wang, Jindong and Okumura, Manabu and Shinozaki, Takahiro},
  journal={Advances in neural information processing systems},
  volume={34},
  pages={18408--18419},
  year={2021}
}

@article{wang2022freematch,
  title={Freematch: Self-adaptive thresholding for semi-supervised learning},
  author={Wang, Yidong and Chen, Hao and Heng, Qiang and Hou, Wenxin and Fan, Yue and Wu, Zhen and Wang, Jindong and Savvides, Marios and Shinozaki, Takahiro and Raj, Bhiksha and others},
  journal={arXiv preprint arXiv:2205.07246},
  year={2022}
}

@article{chen2023softmatch,
  title={Softmatch: Addressing the quantity-quality trade-off in semi-supervised learning},
  author={Chen, Hao and Tao, Ran and Fan, Yue and Wang, Yidong and Wang, Jindong and Schiele, Bernt and Xie, Xing and Raj, Bhiksha and Savvides, Marios},
  journal={arXiv preprint arXiv:2301.10921},
  year={2023}
}

@inproceedings{li2024semireward,
  title={Semireward: A general reward model for semi-supervised learning},
  author={Li, Siyuan and Jin, Weiyang and Wang, Zedong and Wu, Fang and Liu, Zicheng and Tan, Cheng and Li, Stan Z},
  booktitle={International Conference on Learning Representations},
  volume={2024},
  pages={25501--25525},
  year={2024}
}

@inproceedings{cheng2025cgmatch,
  title={Cgmatch: A different perspective of semi-supervised learning},
  author={Cheng, Bo and Lu, Jueqing and Tian, Yuan and Zhao, Haifeng and Chang, Yi and Du, Lan},
  booktitle={Proceedings of the Computer Vision and Pattern Recognition Conference},
  pages={15381--15391},
  year={2025}
}

@article{grattafiori2024llama,
  title={The llama 3 herd of models},
  author={Grattafiori, Aaron and Dubey, Abhimanyu and Jauhri, Abhinav and Pandey, Abhinav and Kadian, Abhishek and Al-Dahle, Ahmad and Letman, Aiesha and Mathur, Akhil and Schelten, Alan and Vaughan, Alex and others},
  journal={arXiv preprint arXiv:2407.21783},
  year={2024}
}

@article{yang2024qwen2,
  title={Qwen2. 5-math technical report: Toward mathematical expert model via self-improvement},
  author={Yang, An and Zhang, Beichen and Hui, Binyuan and Gao, Bofei and Yu, Bowen and Li, Chengpeng and Liu, Dayiheng and Tu, Jianhong and Zhou, Jingren and Lin, Junyang and others},
  journal={arXiv preprint arXiv:2409.12122},
  year={2024}
}

\newpage
\appendix

\section{Theoretical Analysis}
\subsection{Proof of Proposition 1}\label{sec:proof_prop_1}
To capture the directional alignment and distributional affinity between a query feature and different rollout classes over the continuous latent space, we analyze the class-conditional distribution on the unit hypersphere. 
Let $\bm{z} \in \mathbb{S}^{d-1}$ denote a query feature vector. For a given class $c \in \{0, 1\}$ (representing incorrect and correct rollouts, respectively), instead of relying on a sparse set of discrete samples, we define the log-expected kernel density $\rho(\bm{z}, c)$ as the continuous expectation of a directional von Mises-Fisher (vMF) kernel over the entire class-conditional distribution $p(\bm{z}'|c)$. Specifically, using a hyperspherical coordinate kernel, $\rho(\bm{z}, c)$ is formulated as:
\begin{equation}
\label{eq:rho_expectation_def}
\scalebox{0.98}{$
\rho(\bm{z}, c) = \log \left( \mathbb{E}_{\bm{z}_c \sim p(\bm{z}'|c)} \left[\exp({\bm{z}^\top \bm{z}_c)}\right] \right)
$}
\end{equation}
where $\bm{z}_c$ is a random variable distributed according to the class-conditional density $p(\bm{z}'|c)$. This formulation implicitly serves as a non-parametric wrapper that smooths and represents the overall geometric alignment between the query $\bm{z}$ and the cluster of class $c$.

Following directional statistics, data constraints on a unit hypersphere $\mathbb{S}^{d-1}$ are naturally modeled by the vMF distribution, which serves as the spherical analogue of the isotropic Gaussian distribution. Assuming that the underlying class-conditional density follows a $d$-variate vMF distribution, i.e., $p(\bm{z}'|c) = \text{vMF}(\bm{\mu}_c, \kappa_c)$, we restate Proposition 1 and provide its analytical derivation below.

\noindent \textbf{Proposition 1.} \textit{Let $\bm{z} \in \mathbb{S}^{d-1}$ be the query feature, and let the class-conditional features $\bm{z}_c$ follow a von Mises-Fisher distribution $\text{vMF}(\bm{\mu}_c, \kappa_c)$.
The log-expected kernel density, defined as $\rho(\bm{z}, c) = \log(\mathbb{E}_{\bm{z}_c} [\exp(\bm{z}^\top \bm{z}_c)])$, admits the following closed-form expression:
\begin{equation}
\rho(\bm{z}, c) = \log C_d(\kappa_c) - \log C_d(\kappa_c'),
\end{equation}
where $\kappa_c' = \| \kappa_c \bm{\mu}_c + \bm{z} \|_2$. }

\begin{proof}
The probability density function of a $d$-variate von Mises-Fisher distribution $\text{vMF}(\bm{\mu}_c, \kappa_c)$ for a unit vector $\bm{z}_c \in \mathbb{S}^{d-1}$ is defined as:
\begin{equation}
f(\bm{z}_c; \bm{\mu}_c, \kappa_c) = C_d(\kappa_c) \exp({\kappa_c \bm{\mu}_c^\top \bm{z}_c}).
\end{equation}

We begin by expanding the expectation inside the logarithm of $\rho(\bm{z}, c)$ as an integral over the hypersphere $\mathbb{S}^{d-1}$ with respect to the normalized surface measure $d\omega(\bm{z}_c)$:
\begin{align}
\mathbb{E}&_{\bm{z}_c \sim \text{vMF}(\bm{\mu}_c, \kappa_c)} \left[ e^{\bm{z}^\top \bm{z}_c} \right] \nonumber \\
&= \int_{\mathbb{S}^{d-1}} e^{\bm{z}^\top \bm{z}_c} \cdot C_d(\kappa_c) e^{\kappa_c \bm{\mu}_c^\top \bm{z}_c} \, d\omega(\bm{z}_c) \nonumber \\
&= C_d(\kappa_c) \int_{\mathbb{S}^{d-1}} e^{\left(\kappa_c \bm{\mu}_c + \bm{z}\right)^\top \bm{z}_c} \, d\omega(\bm{z}_c).
\end{align}

To solve this integral, we introduce a composite vector $\bm{v} = \kappa_c \bm{\mu}_c + \bm{z}$. By factoring out its $\ell_2$-norm $\kappa_c' = \|\bm{v}\|_2$, we can rewrite the exponent, effectively combining the two directional components into a single vMF-like kernel:
\begin{equation}
\left(\kappa_c \bm{\mu}_c + \bm{z}\right)^\top \bm{z}_c = \kappa_c' \left( \frac{\bm{v}}{\kappa_c'} \right)^\top \bm{z}_c = \kappa_c' \bm{\mu}_c'^\top \bm{z}_c,
\end{equation}
where $\bm{\mu}_c' = \bm{v} / \kappa_c'$ is a valid unit vector since $\|\bm{\mu}_c'\|_2 = 1$.
Since $\bm{\mu}_c'$ represents a constant mean direction on the hypersphere, the total integral of $\exp(\kappa_c' \bm{\mu}_c'^\top \bm{z}_c)$ over $\mathbb{S}^{d-1}$ is simply the reciprocal of the normalization constant for a vMF distribution parameterized by concentration $\kappa_c'$. Due to the rotational symmetry of the hypersphere, this normalization factor $C_d(\kappa_c')$ depends strictly on the concentration parameter $\kappa_c'$ and dimension $d$, independent of the specific mean direction $\bm{\mu}_c'$:
\begin{equation}
\int_{\mathbb{S}^{d-1}} e^{\kappa_c' \bm{\mu}_c'^\top \bm{z}_c} \, d\omega(\bm{z}_c) = \frac{1}{C_d(\kappa_c')}.
\end{equation}

Substituting this back into our integral equation, the continuous expectation simplifies to the ratio of the two normalization factors:
\begin{equation}
\mathbb{E}_{\bm{z}_c \sim \text{vMF}(\bm{\mu}_c, \kappa_c)} \left[ e^{\bm{z}^\top \bm{z}_c} \right] = \frac{C_d(\kappa_c)}{C_d(\kappa_c')}.
\end{equation}

Finally, applying the natural logarithm to both sides yields the final analytical expression:
\begin{align}
\rho(\bm{z}, c) &= \log \left( \frac{C_d(\kappa_c)}{C_d(\kappa_c')} \right) \nonumber \\
&= \log C_d(\kappa_c) - \log C_d(\kappa_c').
\end{align}
This completes the proof.
\end{proof}

\section{Additional Experimental Setups}
\label{app:exp_setup}
\subsection{More Implementation Details.}
The maximum input prompt length is constrained to 1,024 tokens, and the generated response length is capped at 4,096 tokens for long-chain reasoning. All rollout prompts are prepended with the standard system prompt: ``\texttt{Let's think step by step and output the final answer within \textbackslash\textbackslash boxed\{\}.}''. To encourage exploration, we omit any form of KL regularization or entropy loss. By default, all models and baselines are trained for a total of 400 global steps; for GeoMin, Stage 1 spans the first 100 steps for representation anchoring, while Stage 2 occupies the remaining 300 steps for adaptive sample mining.

We employ \texttt{Math-Verify}\footnote{\url{https://github.com/huggingface/Math-Verify}} as the sole outcome reward function, deliberately avoiding auxiliary rewards for formatting constraints or response lengths to prevent reward hacking. Finally, rollout generation is accelerated via the \texttt{vLLM} framework\footnote{\url{https://github.com/vllm-project/vllm}} with 8 rollouts per prompt, using a fixed temperature of 1.0 and a top-$p$ value of 1.0. Model checkpoints are archived every 50 training steps for downstream validation.

\subsection{Computation of Normalization Factor}
\label{sec:norm_compute}
During implementation, evaluating the normalization factor $C_d(\kappa)$ of the vMF distribution requires integrating the probability density function to unity over the high-dimensional hypersphere, formulated as:
\begin{equation}
\label{eq:norm_factor_1}
C_d(\kappa) = \frac{\kappa^{d/2-1}}{(2\pi)^{d/2} I_{d/2-1}(\kappa)}.
\end{equation}
A key technical challenge in our setup is the robust computation of the modified Bessel function of the first kind, $I_{d/2-1}(\kappa)$, which admits the following power series expansion:
\begin{equation}
\label{eq:bessel}
I_{d/2-1}(\kappa) = \sum_{k=0}^{\infty} \frac{1}{k! \Gamma(d/2+k)} \left( \frac{\kappa}{2} \right)^{2k + d/2 - 1}.
\end{equation}
Directly evaluating this high-order expansion is computationally prohibitive, and standard forward recurrence triggers severe numerical tracking errors or underflow when $\kappa$ is small. 
To ensure training stability, we adopt the Miller recurrence algorithm following \citet{du2024probabilistic} to execute the backward recurrence relation:
\begin{equation}
\hat{I}_{\nu-1}(\kappa) = \frac{2\nu}{\kappa} \hat{I}_{\nu}(\kappa) + \hat{I}_{\nu+1}(\kappa).
\end{equation}
Specifically, we set a sufficiently large truncation limit $L_{\max}$ (where $L_{\max} \gg d/2-1$) and initialize the trial values to $\hat{I}_{L_{\max}}(\kappa) = 1$ and $\hat{I}_{L_{\max}+1}(\kappa) = 0$. We then iteratively compute $\hat{I}_{\nu}(\kappa)$ downward from $\nu = L_{\max}-1$ down to $0$. The exact value of the target order $I_{d/2-1}(\kappa)$ is subsequently recovered via:
\begin{equation}
I_{d/2-1}(\kappa) = \frac{I_0(\kappa)}{\hat{I}_0(\kappa)} \hat{I}_{d/2-1}(\kappa),
\end{equation}
where the baseline $I_0(\kappa)$ is efficiently and stably computed using the native \texttt{torch.special.i0} kernel in PyTorch.

\section{Additional Experimental Results}
\label{app:exp_results}

\begin{table*}[!t]
\centering
\caption{\small In-domain (ID) and out-of-domain (OOD) performance using Deepseek-R1-Distill-Llama-8B. Methods are evaluated
under semi-supervised (10\% labeled data) settings. \textbf{Bold} denotes the best results.}
\label{tab:results_llama}
\setlength{\tabcolsep}{3pt}  
\renewcommand{\arraystretch}{1.1} 
\resizebox{\textwidth}{!}{%
\begin{tabular}{lcccccc|cccc}
\toprule
\multirow{2}{*}{\textbf{Methods}} & \multicolumn{6}{c}{\textbf{In-Domain Performance}} & \multicolumn{4}{c}{\textbf{Out-of-Domain Performance}} \\
\cmidrule(lr){2-7} \cmidrule(lr){8-11}
 & \textbf{AIME 24/25} & \textbf{AMC} & \textbf{MATH-500} & \textbf{Minerva} & \textbf{Olympiad} & \textbf{Avg.} & \textbf{ARC-c} & \textbf{GPQA}$^{*}$ & \textbf{MMLU-Pro} & \textbf{Avg.} \\
\midrule
TTRL & 33.3/20.0 & \textbf{73.5} & 80.4 & 30.9 & 47.4 & 47.6 & 34.8 & \textbf{37.6} & 51.4 & 41.3 \\
Tok-entropy & 6.7/6.7 & 36.1 & 59.4 & 19.1 & 29.3 & 26.2 & 48.9 & 11.7 & 38.0 & 32.9 \\
Seq-entropy & 3.3/10.0 & 42.2 & 56.2 & 22.4 & 29.4 & 27.3 & 40.3 & 15.2 & 38.2 & 31.2 \\
Self-certainty & 16.7/16.7 & 51.8 & 73.0 & 26.1 & 43.8 & 38.0 & 25.3 & 16.2 & 44.1 & 28.5 \\
Co-rewarding & 30.0/20.0 & 69.9 & 79.6 & 28.7 & 48.1 & 46.1 & 31.3 & 29.9 & 51.2 & 37.5 \\
TraPO & 33.3/\textbf{23.3} & 67.5 & 81.0 & 32.0 & 52.7 & 48.3 & 34.6 & 34.5 & 51.1 & 40.1 \\
\textbf{GeoMin (ours)} & \textbf{40.0}/\textbf{23.3} & \textbf{73.5} & \textbf{83.6} & \textbf{32.7} & \textbf{55.3} & \textbf{51.4} & \textbf{65.1} & 36.0 & \textbf{51.6} & \textbf{50.9} \\
\bottomrule
% \vspace{-35pt}
\end{tabular}%
}
\end{table*}

\begin{table*}[!t]
\centering
\caption{\small In-domain (ID) and out-of-domain (OOD) performance using Deepseek-R1-Distill-Qwen-1.5B. Methods are evaluated
under semi-supervised (10\% labeled data) settings. \textbf{Bold} denotes the best results.}
\label{tab:results_qwen}
\setlength{\tabcolsep}{3pt}  
\renewcommand{\arraystretch}{1.1} 
\resizebox{\textwidth}{!}{%
\begin{tabular}{lcccccc|cccc}
\toprule
\multirow{2}{*}{\textbf{Methods}} & \multicolumn{6}{c}{\textbf{In-Domain Performance}} & \multicolumn{4}{c}{\textbf{Out-of-Domain Performance}} \\
\cmidrule(lr){2-7} \cmidrule(lr){8-11}
 & \textbf{AIME 24/25} & \textbf{AMC} & \textbf{MATH-500} & \textbf{Minerva} & \textbf{Olympiad} & \textbf{Avg.} & \textbf{ARC-c} & \textbf{GPQA}$^{*}$ & \textbf{MMLU-Pro} & \textbf{Avg.} \\
\midrule
TTRL & \textbf{23.3}/20.0 & 54.2 & 77.6 & 25.7 & 40.9 & 40.8 & 32.3 & 20.3 & \textbf{30.7} & 27.8 \\
Tok-entropy & 6.7/10.0 & 32.5 & 70.2 & 27.2 & 33.6 & 30.0 & 32.8 & 18.3 & 30.3 & 27.1 \\
Seq-entropy & 13.3/10.0 & 42.2 & 72.4 & 27.6 & 34.8 & 33.4 & 31.5 & 20.3 & 30.4 & 27.4 \\
Self-certainty & 13.3/13.3 & 56.6 & 75.6 & 27.6 & 40.3 & 37.8 & 31.4 & 19.3 & 30.6 & 27.1 \\
Co-rewarding & 13.3/20.0 & 57.8 & 77.6 & 24.6 & 39.5 & 38.8 & 32.3 & 17.3 & 30.5 & 26.7 \\ 
TraPO & 16.7/13.3 & 60.2 & 78.2 & 24.3 & 41.9 & 39.1 & 32.6 & 16.8 & 30.5 & 26.6 \\
\textbf{GeoMin (ours)} & 20.0/\textbf{23.3} & \textbf{67.5} & \textbf{79.6} & \textbf{32.0} & \textbf{49.6} & \textbf{45.3} & 32.0 & \textbf{22.3} & \textbf{30.7} & \textbf{28.3} \\
\bottomrule
% \vspace{-35pt}
\end{tabular}%
}
\end{table*}

\subsection{Extend GeoMin to More Models}
\label{app:more_model}
To thoroughly evaluate the generalizability and robustness of GeoMin, we extend our framework to a broader range of base models across distinct architectural families and parameter scales. Specifically, we conduct extensive semi-supervised RLVR experiments utilizing DeepSeek-R1-Distill-Qwen-1.5B and DeepSeek-R1-Distill-Llama-8B, which represent distinct architectural lineages (Qwen~\citep{yang2024qwen2} vs. Llama~\citep{grattafiori2024llama}) and model sizes (1.5B vs. 8B), respectively.

First, the experimental results for DeepSeek-R1-Distill-Llama-8B are detailed in Table~\ref{tab:results_llama}. GeoMin demonstrates a substantial performance leap, outperforming the strongest baseline by $3.1\%$ in average In-Distribution (ID) accuracy and $9.6\%$ in average Out-of-Distribution (OOD) accuracy. Strikingly, we observe a pronounced performance variance among all compared methods on the OOD benchmarks for this 8B model, with accuracies spanning a wide range from $28.5\%$ to $50.9\%$. This high sensitivity indicates that the larger architecture is highly susceptible to the quality of sample mining during reinforcement learning. 

For the smaller model, DeepSeek-R1-Distill-Qwen-1.5B, the comprehensive evaluations are summarized in Table~\ref{tab:results_qwen}. GeoMin achieves consistent gains, surpassing the best-performing baseline by $4.5\%$ on ID tasks and $0.5\%$ on OOD benchmarks. In sharp contrast to the 8B model, the OOD performance profiles of all methods on this 1.5B distilled model are tightly clustered within a narrow margin of $26.6\%$ to $28.3\%$, suggesting an inherent performance bottleneck or capacity saturation.

Notably, despite these radically different empirical variance profiles and model behaviors across scales, GeoMin consistently secures SOTA results. This consistent superiority strongly demonstrates that GeoMin is an architecture-agnostic, scalable, and robust paradigm for unlocking the latent potential of unlabeled data.

\begin{table*}[!t]
\centering
\caption{\small Ablation study on token representation pooling strategies.}
\label{tab:ablation_annotation}
\setlength{\tabcolsep}{5pt}  
\renewcommand{\arraystretch}{1.1} 
\resizebox{\textwidth}{!}{%
\begin{tabular}{lcccccc|cccc}
\toprule
\multirow{2}{*}{\textbf{Methods}} & \multicolumn{6}{c}{\textbf{In-Domain Performance}} & \multicolumn{4}{c}{\textbf{Out-of-Domain Performance}} \\
\cmidrule(lr){2-7} \cmidrule(lr){8-11}
 & \textbf{AIME 24/25} & \textbf{AMC} & \textbf{MATH-500} & \textbf{Minerva} & \textbf{Olympiad} & \textbf{Avg.} & \textbf{ARC-c} & \textbf{GPQA}$^{*}$ & \textbf{MMLU-Pro} & \textbf{Avg.} \\
\midrule
Average & 18.0/15.5 & 57.0 & 83.3 & 43.5 & 49.9 & 44.5 & 92.6 & 45.2 & 64.7 & 67.5 \\
\textbf{Last Token} & \textbf{24.5}/\textbf{19.2} & \textbf{58.5} & \textbf{86.7} & \textbf{45.5} & \textbf{52.7} & \textbf{47.9} & \textbf{93.7} & \textbf{48.2} & \textbf{66.5} & \textbf{69.5} \\
\bottomrule
\end{tabular}%
}
\end{table*}

\begin{figure*}[ht]
	\centering
    \subfigure[Ablation on $\alpha$\label{fig:alpha_ablation}]{
		\includegraphics[width=0.31\textwidth]{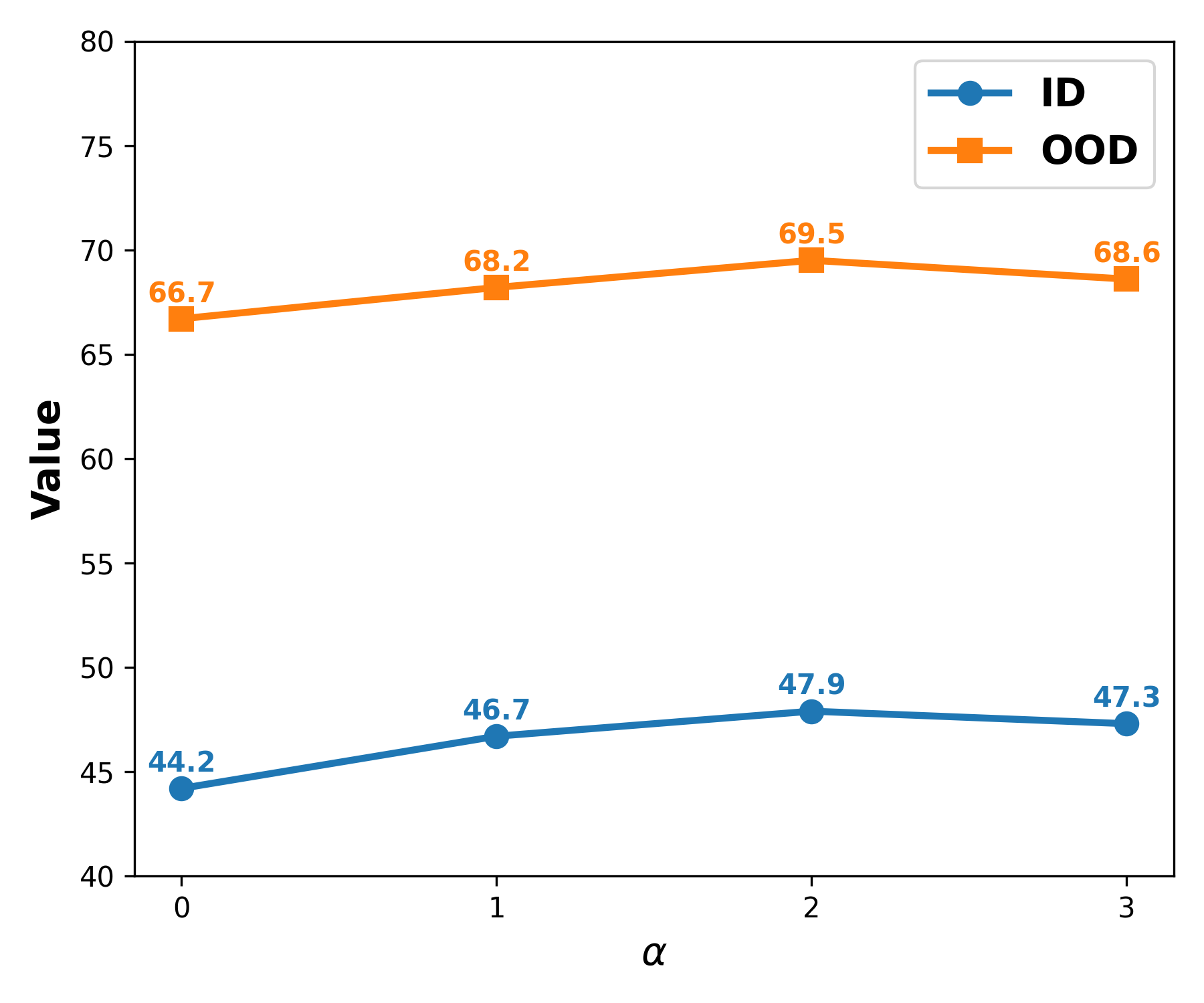}
	}
    \subfigure[Ablation on $K$\label{fig:topk_ablation}]{
		\includegraphics[width=0.31\textwidth]{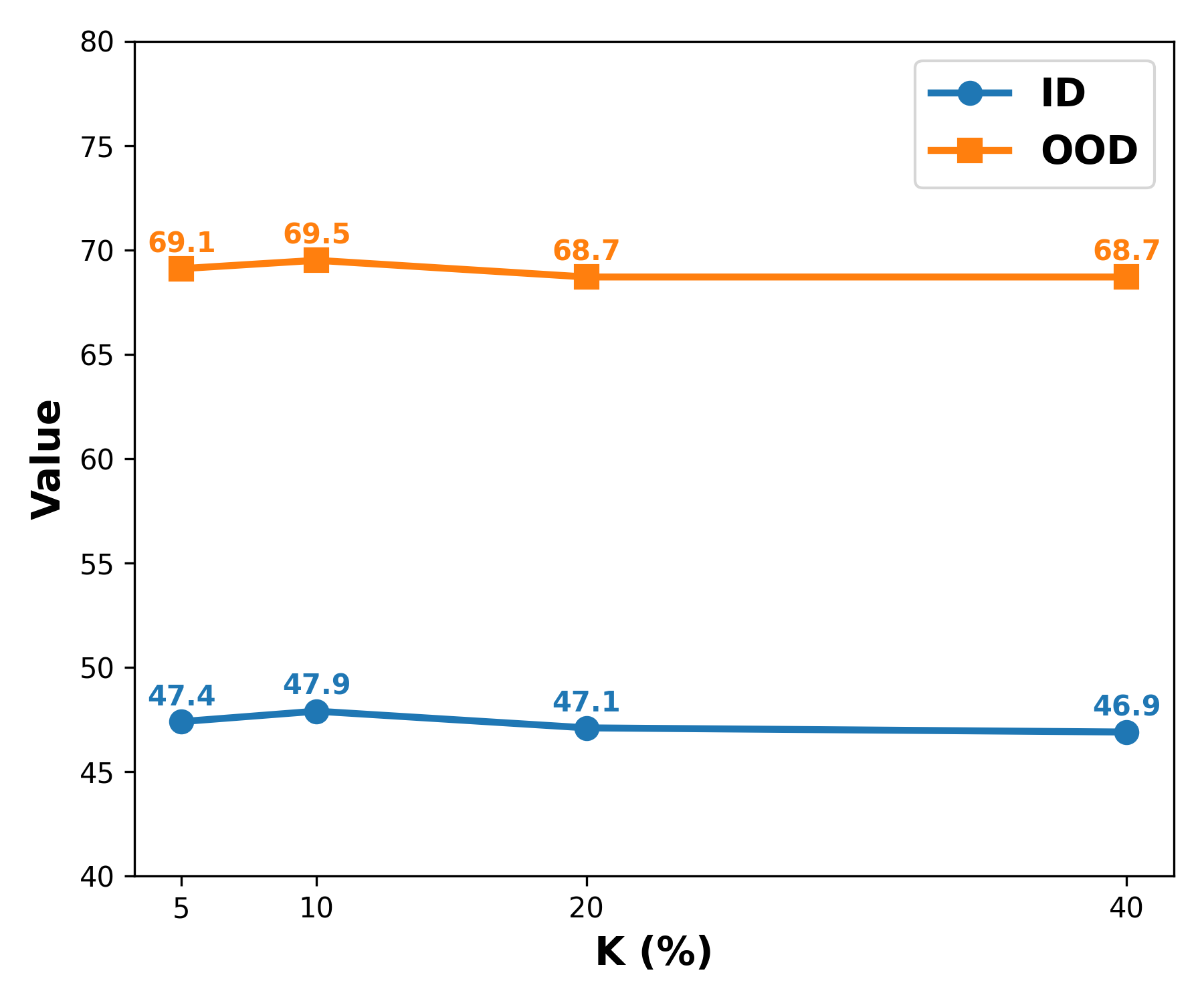}
	}
    \subfigure[Ablation on $\tau$\label{fig:tau_ablation}]{
		\includegraphics[width=0.31\textwidth]{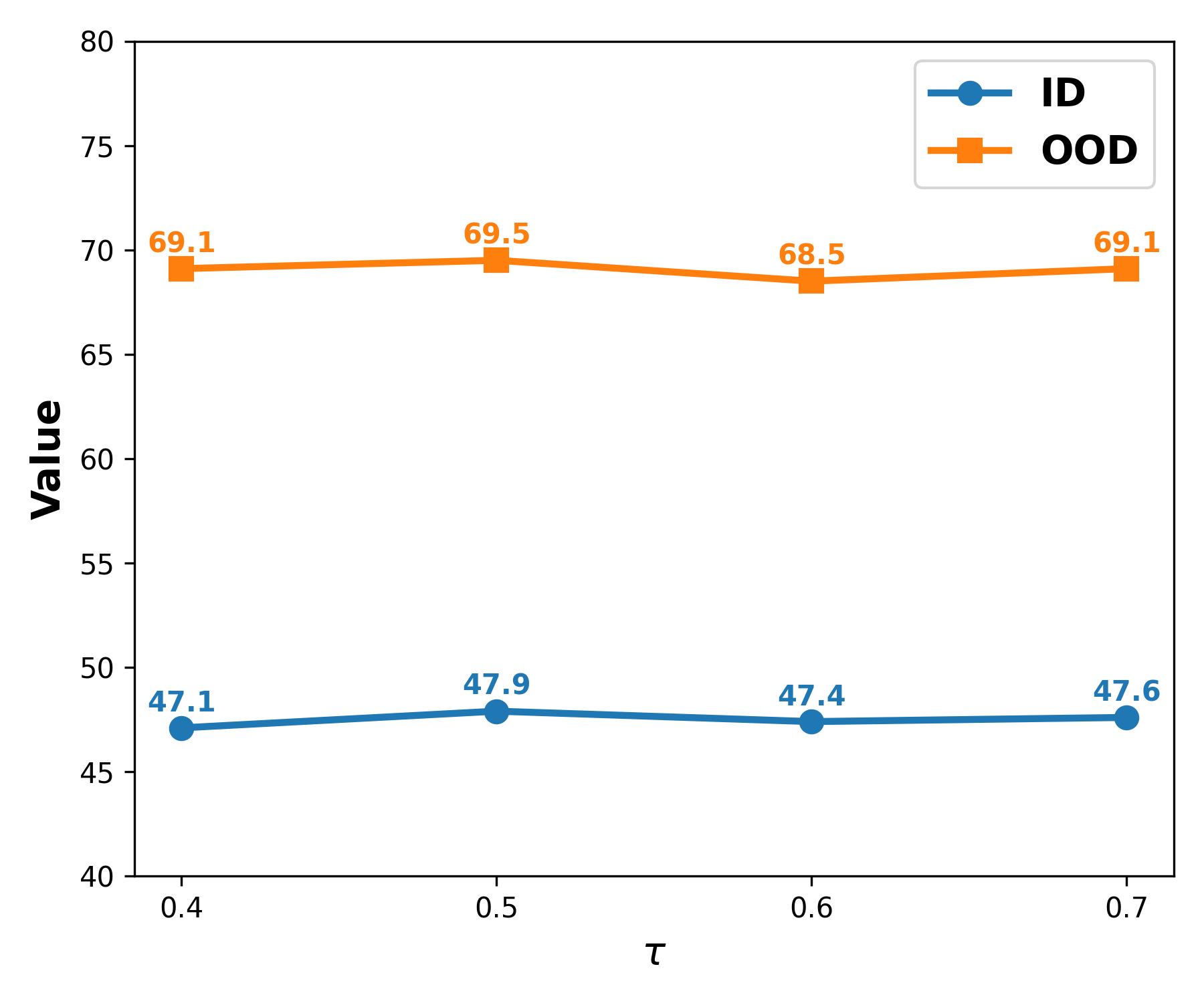}
	}
   \caption{\small Hyperparameter sensitivity analysis on advantage reweighting factor $\alpha$, top-$K$ layers, and GMM filtering threshold $\tau$.}
   \label{fig:hyper_sensitivity}
\end{figure*}

\subsection{Hyperparameter Sensitivity Analysis}
\label{sec:hyper_sensitivity}
A key desideratum for a robust semi-supervised framework is its insensitivity to hyperparameter tuning, ensuring stable deployment across diverse scenarios. To demonstrate this structural advantage, we evaluate the stability of GeoMin by conducting extensive hyperparameter sensitivity investigations on three core parameters: the advantage reweighting factor $\alpha$, the number of top-$K$ layers, and the GMM filtering threshold $\tau$. Crucially, as detailed below, GeoMin exhibits remarkable resilience across all tested parameter ranges, with performance remaining consistently high and tightly bounded around the optimal default configurations. All empirical evaluations are conducted using the default backbone model, with the comprehensive results summarized in Figure~\ref{fig:hyper_sensitivity}.

\paragraph{Advantage Reweighting Factor $\alpha$.}
The parameter $\alpha$ controls the geometric focus on informative boundary samples during the supervised anchoring phase. As empirically illustrated in Figure~\ref{fig:hyper_sensitivity}, varying $\alpha \in \{0, 1, 2, 3\}$ showcases a clear bell-shaped performance curve, where the model achieves its peak performance on both ID and OOD benchmarks at the default choice $\alpha = 2$. 
From a mechanistic perspective, setting $\alpha$ too small ($\alpha \le 1$) limits performance because the model pays insufficient attention to critical boundary samples, leading to inadequate representation discriminability in the latent space. Conversely, an excessively large $\alpha$ ($\alpha = 3$) overlooks confident samples, forcing the policy to heavily overfit to a minority of ambiguous samples while neglecting foundational learning signals from the broader distribution. Thus, $\alpha = 2$ strikes an optimal balance.

\paragraph{Number of Top-$K$ Layers.}
The hyperparameter $K$ determines the receptive field of layers selected for confidence score computation. When sweeping $K$ across $\{5\%, 10\%, 20\%, 40\%\}$, GeoMin yields tightly bounded accuracy scores, with the optimal performance materialized at our default configuration ($K=10\%$). This peak highlights a distinct trade-off in representation capacity: when $K$ is too small ($K=5\%$), the localized layer subset fails to encapsulate sufficient hierarchical context, causing an information scarcity bottleneck. On the other hand, inflating $K$ to larger scales ($K \ge 20\%$) introduces non-discriminative or noisy deep layers into the evaluation pool, effectively diluting the overall confidence calculation.

\paragraph{GMM Filtering Threshold $\tau$.}
The threshold $\tau$ governs the filtration criteria during unlabeled sample mining. When adjusting $\tau$ from $0.4$ to $0.7$, the resulting ID and OOD metrics shift minimally and remain highly stable, achieving a mild optimum at $\tau = 0.5$. Theoretically, an excessively low threshold relaxes the acceptance boundaries, riskily incorporating noisy pseudo-labels that poison policy training. Conversely, a rigid threshold proves overly restrictive, filtering out valuable sparse samples and resulting in data under-utilization. Crucially, the remarkably minor performance fluctuations across the entire spectrum of $\tau$ strongly underscore that GeoMin is inherently robust to threshold perturbations. This exceptional resilience empirically validates that our GMM-driven fitting operates as a highly adaptive and self-regulating selection mechanism, capturing true data distributions rather than relying on brittle manual heuristics.

\subsection{Effect of Embedding Pooling Strategies}
\label{sec:ablation_pooling}
To investigate the optimal strategy for extracting sequence-level representations within GeoMin, we conduct an ablation study comparing two distinct pooling mechanisms: (1) Average Pooling (denoted as \textit{Average}), which computes the arithmetic mean of embeddings across all tokens within the entire rollout sequence, and (2) Last Token Embedding (denoted as \textit{Last Token}), which leverages the representations of the final token as the sequence-level summary. As illustrated in Table~\ref{tab:ablation_annotation}, the \textit{Last Token} strategy consistently outperforms the \textit{Average} baseline. Specifically, the Last Token configuration secures a $+3.4\%$ absolute gain in ID average accuracy and a solid $+2.0\%$ improvement in OOD average accuracy. 

From a mechanistic perspective, this disparity stems from the causal self-attention mechanism inherent in modern autoregressive language models. In long-form multi-step reasoning rollouts, the final token naturally acts as an informational bottleneck that encapsulates and compresses the contextual dependencies, logical chains, and semantic states of the entire sequence. Instead, applying a naive average pooling across the entire sequence heavily dilutes these crucial task-specific representations by treating non-discriminative transition tokens and static prompt fillers with equal weight. This indiscriminate averaging introduces substantial representation redundancy and exacerbates the anisotropy problem, thereby contaminating the geometric alignment targeted by GeoMin with low-level token noise. The superior resilience and SOTA performance materialized by the Last Token strategy firmly validate its efficacy in capturing the intrinsic reasoning resonance of generative trajectories, proving that isolating the final state is vital for high-fidelity semantic clustering.

\begin{table}[h]
\centering
\caption{\small Quantitative comparison of total training wall-clock time and speedup ratio to reach the best checkpoint.}
\label{tab:efficiency_comparison}
\setlength{\tabcolsep}{8pt}
\renewcommand{\arraystretch}{1.1}
\small
\begin{tabular}{lcc}
\toprule
\textbf{Method} & \textbf{Total Training Time} & \textbf{Speedup} \\
\midrule
TraPO & 28h 44m 58s & 1.00$\times$ \\
\textbf{GeoMin (Ours)} & \textbf{13h 24m 36s} & \textbf{2.14$\times$} \\
\bottomrule
\end{tabular}
\end{table}

\subsection{Computational Overhead Comparison}
\label{sec:efficiency}
To evaluate the practical viability and scalability of GeoMin, we record and compare the total computational wall-clock time required to reach the best checkpoint between our framework and the competitive baseline, TraPO. Specifically, GeoMin completes its training and reaches convergence in \textbf{13h 24m 36s}. In contrast, TraPO demands a significantly heavier footprint, requiring \textbf{28h 44m 58s} to achieve the same milestone. This sharp contrast reveals that GeoMin achieves a substantial reduction in total training overhead, yielding a prominent \textbf{2.14$\times$ speedup} over TraPO.

From an architectural standpoint, this discrepancy in computational overhead is intrinsically rooted in TraPO's training paradigm, which requires an extensive multi-epoch warm-up phase to establish historical pass-rate trajectories. During this protracted stage, a vast amount of unlabeled data is utilized exclusively for rolling out sequences without contributing to any gradient updates, thereby incurring purely wasteful computational overhead. This decoupled mechanism forces the system to squander valuable hardware resources on heavy inference workloads that are entirely isolated from optimization. Consequently, such an inefficient design imposes a severe scalability bottleneck, making TraPO prohibitively expensive to deploy on large-scale unlabeled datasets.

Crucially, this architectural flaw not only drains computing budgets but also inherently restricts TraPO's performance upper bound. Because the optimization during the lengthy warm-up phase is strictly confined to a limited pool of labeled data, the model rapidly exhausts its learning capacity and undergoes severe optimization saturation. In information-theoretic terms, the available optimization entropy is prematurely consumed by over-indexing on labeled samples. By the time the unlabeled data is finally integrated into the gradient stream in later stages, the model's policy has already drifted into a rigid regime with diminished optimization headroom, severely bottlenecking its ultimate generalization capabilities. In stark contrast, GeoMin bypasses this sample-mining bottleneck through a unified geometric formulation, achieving both superior computational thrift and an uncompromised performance upper bound.

\section{Pseudo Code}
We describe the GeoMin pipeline in Algorithm \ref{alg:geomin}.

\begin{algorithm}[!h]
\caption{GeoMin}
\label{alg:geomin}
\KwIn{Datasets $\mathcal{D}_l, \mathcal{D}_u$, Policy $\pi_\theta$, Threshold steps $T_1$, Top-$K$, GMM threshold $\tau$}
\KwOut{Optimized policy $\pi_\theta$}

\BlankLine
\For{\text{step } $t = 1$ \KwTo $T$}{
    \tcp{\textbf{Stage 1: Supervised Anchoring ($t \le T_1$)}}
    \If{$t \le T_1$}{
        Sample labeled mini-batch $\mathcal{B}_l \sim \mathcal{D}_l$ and generate rollouts via $\pi_\theta$\;
        Fit online layer-wise vMF parameters $(\bm{\mu}_c^{(l)}, \kappa_c^{(l)})$ via Eq.~\eqref{eq:mean_vec} and \eqref{eq:mu_kappa}\;
        Identify ambiguous boundary sample set $\Omega \subseteq \mathcal{B}_l$ using Eq.~\eqref{eq:boundary_cond}\;
        Amplify advantages $\tilde{A}_k$ via Eq.~\eqref{eq:advantage_amplify} and update policy $\pi_\theta$\;
    }
    
    \BlankLine
    \tcp{\textbf{Stage 2: Semi-Supervised Sample Mining ($t > T_1$)}}
    \Else{
        Sample joint mini-batches $\mathcal{B}_l \sim \mathcal{D}_l$ and $\mathcal{B}_u \sim \mathcal{D}_u$\;
        Refresh vMF parameters and historical buffers $(\mathcal{H}_t^+, \mathcal{H}_t^-)$ using $\mathcal{B}_l$\;
        Select top-$K$ discriminative layers $\mathcal{I}$ maximizing separability $\Delta^{(l)}$ via Eq.~\eqref{eq:separability}\;
        
        \BlankLine
        \For{\text{each unlabeled } $q_i \in \mathcal{B}_u$}{
            Determine pseudo-label $a_i^*$ and compute confidence score $s_i$ over $\mathcal{I}$ via Eq.~\eqref{eq:conf_score}\;
        }
        Fit a 2-component GMM over $\{s_i\}$; Filter reliable pool $\mathcal{B}_u^* = \{ (q_i, a_i^*) \mid p(\mathcal{C}_{\text{high}} \mid s_i) > \tau \}$\;
        
        \BlankLine
        Optimize policy $\pi_\theta$ over the augmented training batch $\mathcal{B}_l \cup \mathcal{B}_u^*$\;
    }
}
\Return{Optimized policy $\pi_\theta$}\;
\end{algorithm}

\end{document}